\documentclass[runningheads]{llncs}
\usepackage[T1]{fontenc}
\usepackage{graphicx}
\usepackage{amssymb}
\usepackage{amsmath}
\usepackage{booktabs}
\usepackage{graphicx}
\usepackage{xcolor}
\usepackage{wrapfig}
\usepackage{todonotes}
\newcommand{\qedexample}{\hfill{$\Box $}}

\usepackage{macros}
\usepackage{comment}
\usepackage{enumerate}

\newenvironment{proof}{\par\noindent\textbf{Proof:}\ }{\hfill$\square$\par}

\spnewtheorem*{claimproof}{Proof of Claim}{\itshape}{\normalfont}

\newcommand{\alert}[1]{{\color{red!80}#1}}

\begin{document}
	\title{Rethinking Explanations: Formalizing Contrast in Description Logics}
	% Foundations of Contrastive Explanations in Description logics
	% Explaining Differences, Not Just Facts: A Contrastive View in Description logics
	% From Justifications to Contrast: Rethinking Explanations in Description logics
	% Formalizing Contrastive Explanations in Description logics
	% From Justifications to Context: Formalizing Contrastive Explanations in Description logics
	% Rethinking Explanations: Formalizing Contrast in Description logics
	
	%\titlerunning{Abbreviated paper title}
	
	\author{Yasir Mahmood \orcidID{0000-0002-5651-5391} 
		\and Arnab Sharma \orcidID{0009-0007-8515-5253} 
		\and
		Axel-Cyrille Ngonga Ngomo 
		\orcidID{0000-0001-7112-3516}
		\and
		Balram Tiwari
	}
	\authorrunning{Y. Mahmood, A. Sharma, A-C. Ngonga, B. Tiwari}
	\institute{Data Science Group, Heinz Nixdorf Institute, Paderborn University Germany
		\email{\{ymahmood, asharma, ngonga, balram\}@uni-paderborn.de}}
	\maketitle              % typeset the header of the contribution

	\begin{abstract}%
		
		There has been a growing interest in explaining entailments over description logic (DL) knowledge bases. 
		The existing explanation formalisms focus on justifications to explain true axioms, and abductive reasoning to explain missing axioms in a knowledge base.
		However, these formalisms only point out the reasoning steps behind a (missing) entailment and  lack a user-centered approach as they do not consider an inquirer's needs, level of understanding, or prior knowledge.
		We propose contrastive explanations, aiming at answering ``why an axiom $P$ (fact) is true instead of another axiom $Q$ (foil)'' over description logic knowledge bases.
		The motivation arises from the observation that when a user discovers that $P$ has occurred, they are often surprised because they anticipated the occurrence of another \emph{similar}  event $Q$.
		Furthermore, individual explanations for ``why $P$'' and ``why not $Q$'' are unsatisfactory since a user expects to see the difference between $P$ and $Q$.
		%   We base our study on Lipton's~\cite{lipton1990contrastive} difference condition and focus on the scenarios when facts and foils are ABox assertions.
		In this work, we first present formal foundations of contrasting questions and then define contrastive explanations within description logics.
		%and then address principles desirable for specific instantiations of this scheme to satisfy.
		To this end, facts include ABox assertions of the form $C(x)$ for a concept $C$ and individual $x$. 
		Possible foils for such facts are assertions $C(y)$ (contrasting against an individual $y$), or $D(x)$ (contrasting against a concept $D$).
		%We argue furthermore that these two choices are natural in the setting of DL KBs. 
		%Our final contribution includes the introduction of various similarity measures to filter the \emph{best} contrastive explanations among all possible ones. 
		Additionally, we explore the properties of contrastive explanations in the DL $\EL$ and $\ALC$. 
		We also provide an implementation of our definition and an experimental evaluation on KBs of varying sizes.
		% which is mostly discussed in the Appendix due to space constraints.
		%Our main contributions include 
		%(1) a general contrastive explanations scheme which can be instantiated by various set-based operations to account for the similarities and differences between them, (2) complexity analysis for the problem to compute an explanation and hardness results
		%(3) several properties enjoyed by various fact-foil combinations.
		
		\keywords{Description logic \and Explanations \and Abduction \and Justifications}
	\end{abstract}

	%Intro: 3 pages
	%Main part: 4 pages
	%	prelim: 1.5 page
	%	new defs: 2.5 pages
	%	Translate Lipton: 
	%	define Lipton and Kean CEs: move example to appendix.
	%	shorten other
	%Preference + EL + Conclusion: 3 pages
	%
	\section{Introduction}

	Description logics (DLs)~\cite{baader2003description,hitzler2009foundations} are foundational to the Web Ontology Language (OWL), enabling precise knowledge compilation and automated reasoning. % essential for semantic web applications. %, especially within ontology engineering and semantic web applications. 
	As a formal framework underpinning OWL, DLs allow for knowledge representation through concepts (classes), roles (properties), and individuals (instances).
	With their well-defined semantics, expressive power, and favorable computational properties, DLs have emerged as fundamental tools in various fields, including ontology engineering~\cite{keet2018introduction}, knowledge representation~\cite{brachman2004knowledge}, and semantic web technologies~\cite{horrocks2003shiq}.
	The success of DLs is corroborated by the development of an increasing number of ontologies based on these formalisms. 
	%\an{Unnecessary sentence ahead} In DLs, the knowledge of a domain is represented via a set of axioms expressing the relationships between the terminology being modelled.
	Within DLs, research has increasingly focused on explaining entailments, where explanations aim to uncover the reasoning that leads to certain conclusions in a knowledge base (KB). 
	In particular, this includes a specific type of explanation known as justifications~\cite{baader1995embedding,schlobach2003non,horridge2011justification,kalyanpur2006debugging}. 
	Given a KB  $\calK$ and an axiom $P$, a justification for $P$ in $\calK$ is a minimal (under set inclusion) subset $\calJ$ of $\calK$ such that $\calJ$ entails $P$. Essentially, a justification answers the question ``why is $P$ true'' by assuming that $P$ is indeed true. However, if $P$ is not true, one natural question arises: what changes should be carried out in $\calK$ to achieve the entailment of $P$ from $\calK$? Abductive reasoning~\cite{peirce1878deduction} offers an approach to address 
	% elsenbroich2006case,wei2014abduction
	this problem. Existing research to this end discusses ABox abduction~\cite{PukancovaH18,Del-PintoS19,Koopmann21a,Koopmann21b}, TBox abduction~\cite{DuWM17,HallandBK14} and concept abduction~\cite{GlimmKW22}. 
	%Motivated by causal reasoning, these approaches seek to capture the elements necessary to generate a desired entailment.
	
	%+++++Old text by Yasir
	%Given a knowledge base  $\calK$ and an axiom $\alpha$, a justification for $\alpha$ in $\calK$ is a minimal (under set inclusion) subset $\calJ$ of $\calK$ such that $\calJ\models \alpha$. \as{Do we want to introduce $\alpha$ here or keeping P?}
	%Essentially, a justification answers the question ``why is $\alpha$ true in $\calK$'' and one assumes that $\alpha$ is indeed true in $\calK$.
	%However, if $\alpha$ is not true, one natural question arises what changes should be employed in $\calK$ to achieve the entailment of $\alpha$ from $\calK$?
	%Abductive reasoning~\cite{peirce1878deduction,elsenbroich2006case,wei2014abduction} offers an approach to address the question of the form ``why is $\alpha$ not true in $\calK$''. 
	%The aim of abduction~\cite{elsenbroich2006case,wei2014abduction} is to explain why an axiom $\alpha$ is not entailed from $\calK$.
	%Formally, a set of axioms $\calH$ is an explanation for $\alpha$ if $\calK\cup\calH$ is consistent and $\calK\cup\calH\models \alpha$. 
	%Existing research on abduction and Description logics discusses ABox abduction~\cite{PukancovaH17,PukancovaH18,Del-PintoS19,Koopmann21a,Koopmann21b}, TBox abduction~\cite{DuWM17,HallandBK14} and concept abduction~\cite{GlimmKW22}. 
	
	%Description logicss (DL) are knowledge representation formalisms specially targeted to encode the termino\text{Math}al knowledge of an application domain. 

	%\paragraph{Contrastive Explanations}
	A justification for $P$ answers ``why P'' by including the actual axioms required to infer $P$ from the given KB.
	The adequacy and relevance of an explanation are determined by the reasoner's inference skills, which do not necessarily consider the user's preferences or interests.
	It has been observed~\cite{lipton1990contrastive} that a ``why'' question is  usually accompanied by a {\em contrast} reflecting the context in which the explanation for the question is demanded. 
	However, the question ``why $P$'' and an answer to it %are %not user-centric, as they 
	do not sufficiently address the aspect of reasoning that interests the user. 
	%When an inquirer questions the explainer (often a reasoner) why a given fact is true, 
	%In other words, the response by the explainer is in connection to the posed query as the explanation consists of the actual steps required to infer the query from the given KB.
	%In particular, the inquirer's specific needs and their level of understanding are often overlooked.
	%In particular, as observed in causality research, causal questions are inherently comparative, typically asking not only ``why $P$'' but also ``why $P$ instead of $Q$''. 
	Integrating contrasts that compare a fact to an expected alternative (or foil), can capture the distinctions that help users understand why a particular outcome prevailed over another.
	In response to these concerns, finding explanations by asking contrastive questions~\cite{lipton1990contrastive,kean1998characterization,Miller_2021,miller2019explanation} has become a popular research area.  %within explainable AI~\cite{stepin2021survey, IgnatievNA020,eiter2023contrastive}. 
	This popularity can be credited to the intuitive nature of such explanations~\cite{miller2019explanation}.
	%
	%The existing explanation services, in particular, 
	From a cognitive perspective, contrastive questions encourage comparison, help decision-making, and improve understanding of the underlying factors~\cite{eiter2023contrastive}. 
	We illustrate this in the following example.
	%~\cite{miller2019explanation}. Consider 
	%Further, a contrastive question allows to include partial information about the inquirer by explicitly stating the contrasting case, revealing which parts of the causal history are clear to them~\cite{eiter2023contrastive}.
	%Further, this approach supports strong mental models, making it easier to adjust to similar situations.
	%Consider an example, when someone asks why you went to the mountains for the holiday, they are looking for a response to the question ``why did you go to the mountains instead of (for example) the seaside''.
	%In this scenario, the cognitive perspective emphasizes the focus on comparative analysis. %When asked why you chose the mountains for your holiday, 
	%More specifically, the inquiry is not about the general notion of taking a break but rather why you opted for that specific destination over other alternatives. %The interest lies in understanding the cognitive processes behind your decision-making, shedding light on factors such as personal preferences, previous experiences, or the benefits of mountains compared to other alternatives.
	%
	\begin{example}\label{ex:Hired-intro}
		Assume the context of hiring at a company, where a model is utilized to classify potential candidates for the position. 
		Suppose an applicant (Alice) was hired for the position but another candidate (Bob) wasn't.
		Then, Bob would be interested in querying ``why Alice was hired and not him'' rather than the mere reason ``why Alice was hired'' or ``why he was not hired''.
		I.e., Bob is interested in knowing the \emph{difference} between him and the successful candidate. \qedexample
	\end{example}
	%The essence of replacing a why question with a contrastive one helps in understanding the reason by not asking what makes a candidate successful, but instead, what differentiates them from such an applicant.
	%We will expound on this example further during the rest of our paper.
	
	%\an{Demonstrates is a strong word. Elucidates?} 
	This example highlights how explicitly stating the contrast allows the inquirer’s explanatory preference to be integrated into the inference process. 
	This viewpoint on contrastive explanations (CEs) offers a clearer understanding of the reasoning step in the presence of alternatives.
	Moreover, the process of highlighting the differences between alternative and equally viable choices promotes more informed decisions. 
	The intuition lies in understanding the processes behind decision-making, shedding light on factors such as preferences, previous experiences, or the advantages of a choice over other alternatives.
%	CEs also facilitate interactive dialogues~\cite{eiter2023contrastive} where users can pose multiple queries to the explainer based on their previous responses.
	%\begin{example}\label{ex:Hired-intro2}
	%   Reconsider the scenario from Example~\ref{ex:Hired-intro}.
	%   Assume a candidate is invited for the interview if he/she is a qualified AI expert.
	%   The system classifies Alice and Bob as qualified; further, Alice is categorized as an AI expert, whereas Bob as a theorist.
	%   After realizing this, Bob seeks clarification on why he was classified as a theorist rather than an AI expert.
	%    An AI expert (resp., a theorist) is a person who specializes in some AI-related (theoretical) topic.
	%    Publishing papers on a topic makes one specialized in that topic. 
	%    Moreover, consider the following facts about two individuals, Alice and Bob. 
	%    
	%    Assume a situation in which an applicant (Alice) was invited for the interview but another candidate (Bob) wasn't.
	%   Then, Bob would be interested in querying ``why Alice was invited instead of him'' rather than the mere reason to ``why Alice was invited for the interview''.
	%    Put differently, Bob is interested in exploring the \emph{difference} between him and the selected candidate. 
	%\end{example}
	
	Contrastive explanations are extensively studied for machine learning (ML) models, especially in the context of explainable AI~\cite{IgnatievNA020,dhurandhar2018explanations,artelt2021efficient,stepin2021survey,IgnatievNA020}. %, and decision support systems. 
	By framing explanations in a contrastive manner, models can provide insights into why a particular decision was made and why it was favored over alternative choices. This approach not only enhances transparency and interpretability but enables users to better understand the underlying factors driving the model's predictions or recommendations. 
	While CEs are commonly utilized to explain the predictions of ML models, they largely remain unexplored in the context of DLs~\cite{Koopmann26}.
	%Why explanation in connection to OWL is needed
	Given that OWL relies on DLs to establish hierarchical relationships and enable complex reasoning within ontologies, this gap is especially pronounced. 
	Notable medical ontologies (such as SNOMED CT~\cite{article} and GALEN~\cite{rector1997}), use DLs to classify entities, supporting inferences about potential disease implications. This foundational reliance on DLs highlights the need for  contrastive explanations to clarify why certain facts prevail over others in knowledge-based reasoning.
	
	To bridge this gap, we propose CEs for DL KBs in this study. 
	%We %do not aim to criticize existing approaches; instead, 
	Our focus is on introducing an additional explanation-based mechanism that is natural and beneficial in various applications where the user's understanding of the decision is crucial. 
	%of the form ``why P instead of Q?'' in the context of DL KBs. 
	%In particular, we are interested in answering why an assertion follows from a knowledge base, whereas another \emph{similar} one does not, by highlighting the causal difference between P and Q.
	To this aim, we introduce contrasting questions
	for an event $P$ which has manifested (known as the \emph{fact}) against an event $Q$ that does not manifest (called the \emph{foil}). 
	This contrasting situation corresponds to the scenario when an inquirer finds out that $P$ has occurred but they expected $Q$ instead.
	We argue that the separate answers to ``why $P$'' given by its justification, and ``why not $Q$'' via its abductive hypothesis, fail to depict the actual difference between $P$ and $Q$. 
	Indeed, in such scenarios, an inquirer receives two explanations that they are to compare themselves.  
	Moreover, there is no way of defining the \emph{best} contrastive explanation since the individual explanations for $P$ and $Q$ may not share \emph{common features}. 
	In this work, we formalize how explanations for $P$ and $Q$ can share common features and identify what the preferred contrastive explanation means under a specific criterion. 
	To summarize, our contributions are as follows.
	\begin{itemize}
		\item We define CEs for DL KBs. To achieve this, we adapt Lipton's~(\cite{lipton1990contrastive}) approach to CEs where facts and foils are formulated as ABox assertions.
		\item We present a general scheme for defining CEs, and give some instantiations for this scheme. We also highlight interesting aspects of each instantiation and their relevance for DL KBs. % and list some properties they satisfy.
		\item We address several criteria for preferred explanations and explore their complexity theoretic properties in two DLs, namely $\EL$ and $\ALC$.% in some detail.
		\item We present a prototypical implementation and evaluate it on several KBs.
	\end{itemize}
	
	%
	%The paper is organized as follows. 
	We next discuss the related work.
	Then, Section~\ref{sec:preliminaries} introduces preliminary notation and Section~\ref{sec:contrast-explanations} describes our formalization of contrastive explanations together with their different aspects for DL KBs.
	Section~\ref{sec:theory} presents an overview of certain theoretical and computational properties for CEs in $\EL$ and $\ALC$, which is followed by an experimental evaluation in Section~\ref{sec:exp}. 
	Finally, Section~\ref{sec:conclusion} concludes our paper  with directions for future work.
	The source code and details on the experimental setup to reproduce our results are available online~\footnote{\url{https://github.com/dice-group/CEs_DLs}}.
	%Throughout this paper, we adopt the term “contrastive explanations (CEs)” to frame our causal approach. Our primary objective is to highlight the causal relationships between facts and foils most relevant to users in DLs reasoning.
	
	\subsubsection{Related Works.}\label{sec:related-work}
	%Currently, justifications, axiom pinpointing, and abduction are the main explanation approaches in DLs.
	\emph{Justification}-based approaches \cite{Kalyanpur2007Finding,Horridge11,KalyanpurPG06} aim at finding explanations for why an axiom is entailed in a knowledge base.
	Other techniques for explaining why questions include proofs~\cite{AlrabbaaBBKK20}.
	%These explanations can also be computed via axiom pinpointing \cite{SchlobachC03,BaaderPS07,BaaderPS07a} %BaaderP10which yields a minimal subset of the knowledge base that entails the given axiom.
	\emph{Abductive} reasoning~\cite{elsenbroich2006case,wei2014abduction} computes a hypothesis that, together with the knowledge base, is sufficient to entail a given set of axioms.
	ABox abduction~\cite{klarman2011abox,PukancovaH18,Del-PintoS19,Koopmann21a} expresses the hypothesis and the query in terms of ABox axioms.
	There also exists research on explanations for consistent query answering, explaining why some answers are (not)  returned under a given repairing semantics~\cite{bienvenu2019computing,lukasiewicz2022explanations}.
	
	%Once abductive explanations or justifications have been obtained, ontology \emph{repairing}~\cite{BaaderKNP18,BaaderKKN21,BaaderKKN22} 
	%provides means to make minimal changes to a given knowledge base, enforcing or preventing a particular entailment. 
	%Moreover, approaches that employ ABox repairing~\cite{BienvenuB16} address the task of making an inconsistent ABox consistent before answering queries on the knowledge base.
	
	Lipton~(\cite{lipton1990contrastive}) defined CEs by citing the causal difference between a fact and the foil. % via the absence of a corresponding event in the history of the foil's failure.
	We base our study on a slight modification of Lipton's model and allow including the part of KB essential in explaining the foil instead of only assertions missing in the history of the foil.
	% Our decision is based on the fact that in some combinations of facts and foils, the known axioms about the foil are also relevant in understanding how missing axioms cause the foil (see Ex.~\ref{ex:running-contrastH}).
	% Lipton's definition does not extend classical justification-based approaches, whereas our approach does. He seems unsure whether ``every apparently non-contrastive question should be analysed in contrastive form''.
	%
	Kean~(\cite{kean1998characterization}) presented a model of abduction using contrasts, where a contrastive explanation cites the difference between the minimal models of the fact and the foil. 
	%Consequently, his CEs are model-theoretic, while our work is based solely on the entailment relation in DLs.
	Recently, CEs have been defined for Answer Set Programs~\cite{eiter2023contrastive} and for propositional logic~\cite{GeibingerJKLV25}. 
	The difference to our work is that we target entailment questions in DLs.
	Finally, \cite{Koopmann26} introduced contrastive explanations for ABox entailment to explain the difference between an individual that is entailed to be in a class and another that is not.
	Although this setting is closely related to our notion of \emph{entity foils}, the underlying definitions differ substantially.
	%In particular, our approach explains a foil within the context of a justification for a specific fact, whereas the aforementioned work seeks to minimize differences between fact and foil at a global level, which may incur significant computational cost.
	%Moreover, the two settings are incomparable and the work in \cite{Koopmann26} is specific in the sense that it can not be extended to our setting.
	Moreover, no prior work targets the notion of contrastive problems involving \emph{concept contrast}.
	
	%
	%In recent years, a number of works have been proposed to compute explanations of an ML model~\cite{IgnatievNM19, IgnatievNA020, IgnatievIS022, xai, HuangIICA022, GCIN20, IzzaIM22} using constraints-solving mechanism. %To this end, these works start by generating a formal representation of the underlying ML model in propositional \text{Math}. Then a formal notion of the explanations is used to compute a (subset/cardinality) minimal explanation. To this end, Ignatiev et al.~\cite{IgnatievNM19} first proposed the concept of {\em abductive} explanations rendering the notion of abductive reasoning from the domain of \text{Math} theory. The notion of abductive explanations therein is defined as the (subset/cardinality) minimal set of {\em feature-value} pairs leading to the output prediction for an input to the ML model. 
	%
	Research on contrastive explanations also exist for reasoning in argumentation frameworks~\cite{BorgB22-1,BorgB22} and classification in machine learning~\cite{IgnatievNA020,dhurandhar2018explanations,artelt2021efficient,stepin2021survey,LuLKD20}.
	However, in the ML setting, abduction answers the ``why'' question, while CEs answer the ``why not'' question. % in ML literature. 
	Further, the notion of CEs from ML cannot be applied directly in our setting, which requires rigorous and formal succinctness.
	%Moreover, they also defined {\em contrastive-explanations} for a specific input-output pair . This is defined as the (subset/cardinality) minimal set of input features that if allowed to take some other values, and when the remaining features do not change their values in the corresponding input, the output prediction is changed.  
	%To compute abductive/contrastive explanations the aforementioned works encoded an ML model in \text{Math}al constraints and then applied a satisfiability modulo theory (SMT) solver to compute explanations of different examples on the model. 
	% Since the ML model described in SMT theory often could take a longer time to compute explanations, in~\cite{IgnatievIS022} Ignatiev et al. proposed an satisfiability (SAT) encoding of the ensemble tree models and used maximum satisfiability (MaxSAT) approach to compute abductive explanations.
	%Finally,~\cite{amgoud2022axiomatic} presented axiomatic foundations of explanations for ML models and introduced a list of axioms desirable for such explainers.
	%
	
	\section{Preliminaries}\label{sec:preliminaries}
	
	% \todo[inline]{It doesn't make sense to introduce ALC as a basic Description logics but then use ALCH in all the examples: either give examples and ALC, or just introduce ALCH as basic DL. \\
		% use a,b for individuals rather than x,y. \\
		% }
	We give a short exposition on the DL $\ALC$~\cite{baader2003description,hitzler2009foundations}.
	The choice is for simplicity rather than a restriction, and our setting is not confined to specific DLs.
	%The main ingredients in DLs include individuals, concepts (unary relations), and roles (binary relations) to provide relationships between individuals.
	%As a result, DLs are fragments of first-order predicate \text{Math} using only unary and binary predicates.
	
	Let $N_I$, $N_C$, and $N_R$ denote mutually disjoint sets of individual, concept,
	and role names, respectively. 
	In $\ALC$, concepts are built by the following grammar rules $C \ddfn A \mid  \neg C \mid  C \sqcap C \mid C \sqcup C \mid  \exists r.C \mid \forall r.C$ where $A \in N_C$ and $r \in N_R$.
	We also use two special symbols $\top$ and $\bot$ to represent \textit{everything} and \textit{nothing}, respectively.
	%We use abbreviations when writing complex concepts, thus $\bot\dfn A\sqcap \neg A$, $\top \dfn \neg \bot$, $C\sqcup {D}\dfn \neg(\neg C\sqcap \neg {D})$, and $\forall r.C\dfn \neg(\exists r. \neg C)$.
	A TBox is a finite set of general concept inclusions (GCIs), i.e., axioms of the form $C \subsum  D$ for concepts $C,  D$. %and depicts relations between the concepts (or terms). 
	An ABox assertion is an expression of the form $C(a)$ (concept assertion) or
	$r(a, b)$ (role assertion), for $a, b \in N_I$, concept $C$, and role $r \in N_R$. 
	%An assertion refers to knowledge about individuals and the relationships between them.
%	An ABox is a finite set of assertions. 
	%We often refer to ABox assertions as ABox axioms as well.
	Finally, a KB is a pair $\calK = (\calT,\calA)$ (often also seen as $\calT\cup\calA$) where $\calT$ and $\calA$ are TBox and ABox, respectively. The signature of $\calK$ is the set of all concepts, roles and individual names that appear in $\calK$.
	\begin{table}[t]
		%\vspace{-.5cm}
		%\begin{wraptable}{r}{0.55\textwidth}
		% 'r' for right side, adjust width as needed
		\centering
		\caption{Syntax and semantics for $\mathcal{ALC}$: $\mathcal{I}$ is an interpretation with domain $\Delta^\mathcal{I}$.}
		\label{tab:semantics}
		\setlength{\tabcolsep}{3pt}
		\resizebox{.6\columnwidth}{!}{
			\begin{tabular}{l  c  c }
				\toprule
				Construct           & Syntax         & Semantics \\
				\midrule
				Atomic concept          & $A$            & $A^{\mathcal{I}}\subseteq{\Delta^\mathcal{I}}$\\
				Role                    & $r$            & $r^\mathcal{I}\subseteq{\Delta^\mathcal{I}\times \Delta^\mathcal{I}}$\\
				Top concept         & $\top$         & $\Delta^\mathcal{I}$\\
				Bottom concept      & $\bot$         & $\emptyset$            \\
				Conjunction          & $C\sqcap  D$    & $C^\mathcal{I}\cap  D^\mathcal{I}$\\
				Disjunction             & $C\sqcup  D$    & $C^\mathcal{I}\cup  D^\mathcal{I}$\\
				Negation                & $\neg C$       & $\Delta^\mathcal{I}\setminus C^\mathcal{I}$\\
				Existential restriction & $\exists r.C$ & $\{ x \mid \exists~ y\in C^\calI, (x,y) \in r^\mathcal{I}\}$\\
				Universal restriction & $\forall r.C$   & $\{ x \mid \forall~ y. (x,y) \in r^\mathcal{I} \implies y \in C^\mathcal{I}\}$\\ 		
				\bottomrule
			\end{tabular} 
		}\vspace{-.5cm}
		%\end{wraptable}
	\end{table}
	The semantics of $\ALC$ is defined in terms of interpretations. 
	An interpretation $\calI$ is a tuple $\calI = (\Delta^\calI, \cdot^\calI)$, where $\Delta^\calI$ is a non-empty set called the domain of $\calI$, and $\cdot^\calI$ is a function that maps every individual name $a \in N_I$ (resp., concept $C\in N_C$, and role $r\in N_R$) to an element $a^\calI \in \Delta^\calI$ (subset $C^\calI \subseteq \Delta^\calI$ and binary relation $r^\calI \subseteq  \Delta^\calI\times \Delta^\calI$).
	The interpretation function can be extended to arbitrary concepts as usual (see Table~\ref{tab:semantics}).
	
	Let $C\subsum  D$ be a GCI and $\calI$ be an interpretation.
	Then, $\calI$ satisfies $C\subsum  D$, denoted as $\calI\models C\subsum  D$ iff $C^\calI\subseteq  D^\calI$.
	Similarly, $\calI$ satisfies an assertion $C(a)$ iff $a^\calI\in {C}^\calI$; and $r(a, b)$ iff $(a^\calI , b^\calI ) \in r^\calI$.
	We write axiom to mean either a TBox axiom or an ABox assertion.
	$\calI$ is a model of the KB $\calK$ ($\calI \models \calK$), iff $\calI$ satisfies every axiom in $\calK$.
	Finally, %let $\calK$ be a DL KB and $\alpha$ be an axiom, then 
	$\calK\models \alpha$ iff $\calI\models\alpha$ for every model $\calI$ of $\calK$.
	%The knowledge base in the following example will serve as a running example for our discussion.
	\begin{example}\label{ex:running-prelim}
		Reconsider the scenario from Example~\ref{ex:Hired-intro}.
		Our KB models that candidates for the position must be qualified AI experts. 
		An AI expert (resp., a theorist) is someone wo publishes in some AI (theoretical) domain.
		Further, working in the AI domain also makes a person AI expert.
		The ABox contains assertions about two individuals. 
		Alice and Bob are both qualified; Alice primarily publishes in ML, an AI topic, whereas Bob publishes in pure \text{Math}, considered as a theoretical topic. 
		%	We let, $\calKrun =(\calT, \calA)$ where $\calT$ and $\calA$ contain the following axioms:  
		We let, $\calKrun =(\calT, \calA)$ where $\calT$ contains the following axioms:
		\begin{description}
			\item[(T1)] $\concept{Qualified}\sqcap \concept{AI\text-Expert} \subsum \concept{Hired}$, 
			\item[(T2)] $ \exists\concept{publishesIn}.\concept{AI}\subsum \concept{AI\text-Expert}$,
			\item[(T3)]  $\exists\concept{publishesIn}.\concept{Theory}\subsum\concept{Theorist}$, \textbf{(T4)} $ \exists\concept{worksIn}.\concept{AI}\subsum \concept{AI\text-Expert}$.
		\end{description}
		The ABox includes the following assertions.
		\begin{description}
			\item[(A1)] $\concept{Qualified} (\text{Alice})$,  \textbf{(A2)} $\concept{Qualified} (\text{Bob})$,
			\item[(A3)] $\concept{Publishes} (\text{Alice}, \text{ML})$,  \textbf{(A4)} $\concept{AI}(ML)$, 
			\item[(A5)] $\concept{Publishes} (\text{Bob}, \text{Math})$,  \textbf{(A6)} $\concept{Theory}(\text{Math})$. 
			% \textbf{(A7)} $\neg \concept{AI}(\text{Math})$. \qedexample
		\end{description}
	\end{example}
	
	\subsubsection*{Reasoning Problems in DLs.}
	Let $\calK$ be a KB and $\alpha$ be an axiom, such that  $\calK\models \alpha$.
	Then, we are interested in the subset-minimal part of $\calK$ (called a \emph{justification} for $\alpha$) relevant for this entailment.
	By a slight abuse of the notation, a justification is a subset of a KB, as $\calJ\subseteq \calT\cup \calA$ for a KB $\calK=(\calT,\calA)$.
	\vspace{-.1cm}
	\begin{definition}[Justification]\label{def:justification}
		Let $\calK=(\calT,\calA)$ be a KB, %with TBox $\calT$, ABox $\calA$, 
		and let $P\dfn C(x)$ be a query (an ABox assertion). Then, a \emph{justification} for $P$ in $\calK$ is a subset  $\calJ\subseteq \calK$ such that (1) $\calJ\models P$, and (2) $\calJ'\not\models P$ for every $\calJ'\subset \calJ$.
	\end{definition}
	We denote by $\JustAll(P,\calK)$ the set of all justifications for $P$ in $\calK$.
	Also, it is convenient to denote a justification for $P$ by $\calJ(P)$ when the context allows. 
	\begin{example}\label{ex:running-just}
		Consider the KB $\calKrun$ (Ex.~\ref{ex:running-prelim}) and let $P\dfn \concept{Hired}(Alice)$.
		A justification for $P$ is given by: $\concept{Qualified}\sqcap \concept{AI\text-Expert} \subsum \concept{Hired}$,  $ \exists\concept{publishesIn}.\concept{AI}\subsum \concept{AI\text-Expert}$,
		$\concept{Qualified} (\text{Alice})$, $\concept{Publishes} (\text{Alice}, \text{ML})$, $\concept{AI}(ML)$.
		%	Therefore, we have $\calJ(P)\dfn \{\textbf{T1},\textbf{T2}, \textbf{A1}, \textbf{A3}, \textbf{A4}\}$.
	\end{example}    
	For non-entailment, one finds hypotheses that retrieve the entailment. %, achieved via
	%. ~\cite{klarman2011abox,PukancovaH18}.
	\begin{definition}[Abductive hypothesis]\label{def:abduction}
		Let $\calK=(\calT,\calA)$ be a KB, % with TBox $\calT$, ABox $\calA$, 
		and $Q\dfn C(x)$ be a query (ABox assertion). An \emph{abductive hypothesis} for $Q$ in $\calK$ is a set $\calH$ of ABox axioms over the signature of $\calK$ such that (1) $\calK \cup \calH$ is consistent, (2) $\calK \cup \calH\models Q$, and (3) $\calH$ is subset-minimal for which (1) and (2) hold.
	\end{definition}
	
	%Notice that $\calK\models Q$ implies that $\calH=\emptyset$ is a trivial abductive hypothesis for $Q$ in $\calK$.
	%As a result, we are mainly interested in cases where $\calK\not\models Q$.
	We denote by $\AbdAll(Q,\calK)$ the set of all abductive hypotheses for $Q$ in $\calK$.
	Further, let $\calH\in \AbdAll(Q,\calK)$, an \emph{abductive justification} (or simply, a justification) for $Q$ in $\calK$ is a justification for $Q$ in $ \calK\cup \calH$. % where $\calH$ is an abductive hypothesis for $Q$ in $\calK$.
	%That is, we define justifications for $Q$ in $\calK$ using an abductive hypothesis, although $\calK\not\models Q$.
	We denote the set of all justifications for $Q$ in $\calK$ by %(by slightly abusing the notation) 
	$\JustAll(Q,\calK)=\{\calJ \mid \exists \calH\in\AbdAll(Q,\calK), \calJ\in \JustAll(Q, \calK\cup\calH)\}$.

	%We additionally call an abduction problem \emph{signature-based}, if our hypothesis set $H$ is only allowed use concepts and roles in a given signature $\Sigma$. 
	
	\begin{example}\label{ex:running-abd}
		Consider $\calKrun$ (Ex.~\ref{ex:running-prelim}) and let $Q\dfn \concept{Hired}(\text{Bob})$.
		Clearly, $\calKrun\not\models Q$, and we find the following abductive hypotheses for $Q$: $\{\concept{Hired}(\text{Bob})\}$ (which is trivial),
		$\calH_0= \{\concept{AI\text-Expert}(Bob)\}$,
		$\calH_1=\{\concept{publishesIn} (\text{Bob}, \text{ML})\}$, and
		$\calH_2=\{\concept{worksIn} (\text{Bob}, \text{ML})\}$.
		%	Note that, the hypotheses $\calH_1$ and $\calH_2$ can be generalized in to a family of abductive hypotheses, namely $\calH^j_1=\{\concept{Specializes} (\text{Bob}, x_j), \concept{AI}(x_j)\}$  and $\calH^j_2=\{\concept{Publishes} (\text{Bob}, x_j), \concept{AI}(x_j)\}$ for $j\in\mathbb N$.
		To be concise, we will consider $\calH_1$ and $\calH_2$ in the following discussions.
		The justifications $\calJ_i(Q)$ for $Q$ corresponding to $\calH_i$ ($i\in \{1,2\}$) include the following axioms, respectively.  
		\begin{enumerate}
			\item
			$\concept{Qualified} (\text{Bob})$,   \alert{$\concept{publishesIn} (\text{Bob}, \text{ML})$}, $\concept{AI}(ML)$, \\ $ \exists\concept{publishesIn}.\concept{AI}\subsum \concept{AI\text-Expert}$, $\concept{Qualified}\sqcap \concept{AI\text-Expert} \subsum \concept{Hired}$.
			\item
			$\concept{Qualified} (\text{Bob})$,   \alert{$\concept{worksIn} (\text{Bob}, \text{ML})$}, $\concept{AI}(ML)$, \\
			$\exists\concept{worksIn}.\concept{AI}\subsum \concept{AI\text-Expert}$,
			$\concept{Qualified}\sqcap \concept{AI\text-Expert} \subsum \concept{Hired}$. \qedexample  
			%		\item
			%		$\concept{Qualified} (\text{Bob})$,   \underline{$\concept{Publishes} (\text{Bob}, \text{x})$}, $\underline{\concept{AI}(x)}$.
			%\item %[$\calJ_2$]$\concept{Qualified} (\text{Bob})$, $\concept{Publishes} (\text{Bob}, \text{Complexity})$, \underline{\concept{AI}(Complexity)}.
		\end{enumerate}
	\end{example}
	\begin{comment}
		\begin{example}[Continue.]\label{ex:prelim-abd}
			Reconsider the KB from Example~\ref{ex:prelim-just} and let $Q\dfn C(y)$.
			Clearly $\calK\not\models Q$.
			Then, two abductive hypotheses for $Q$ are $\calH_1= \{B_2(y)\}$ and $\calH_2 =\{A_2(y)\}$ since $\calH_i\cup \calK\models Q$ for $i=1,2$.
			Further, $\calH_1\cup \{B_1(y), B_1\sqcap B_2 \subsum C \}$ and  $\calH_2\cup \{A_1(y), A_1\sqcap A_2 \subsum C \}$ constitute the corresponding justifications for $Q$ in $\calK$.
		\end{example}
	\end{comment}
	%

	\section{Contrastive Explanations in Description Logics}\label{sec:contrast-explanations} 
	Recall that a \emph{fact} is a true axiom which logically follows from a KB, and a \emph{foil} is a false statement that is not entailed.
	In this work, we focus on explaining ABox assertions.
	As a result, facts ($P$) and foils ($Q$) both take the form $C(x)$ or $ D(y)$ for concepts $C, D$ and individuals $x,y$. For simplicity, we focus on concept assertions in this work, although one can also consider role assertions.

	Let $\calK= (\calT,\calA)$ be a KB, and % which we will call an explanation context.
	$P, Q$ be two assertions over the signature of $\calK$.
	%We call an axiom $\alpha$ a \emph{fact} if $\calK\models \alpha$, and a \emph{foil} otherwise. 
	Then, we call the pair $(P, Q)$ a \emph{contrastive question} (CQ) if $P$ is a fact and $Q$ is a foil.
	Observe that one can in general contrast a fact $P$ against any foil $Q$. 
	%That is, as long as $\calK\models P$ and $\calK\not\models Q$, the pair $(P,Q)$ is a valid CQ.
	Nevertheless, we adhere to Kean's~(\cite{kean1998characterization}) observation that a contrast for a fact is \emph{of the same type} as the fact itself.
	Moreover, Lipton~(\cite{lipton1990contrastive}) also motivated certain combinations of facts and foils, having a \emph{largely similar history, against which the differences stand out}.
	These observations lead to exploring the combinations of facts and foils that are interesting and meaningful for DL KBs.
	%
	\begin{comment}
		\begin{example}\label{ex:running-fact-foil}
			In the explanation context $\calKrun$, $\concept{Hired}(\text{Alice})$ is a fact and $\concept{Hired}(\text{Bob})$ is a foil. % since Alice is Hired and Bob is not. 
			Further, $\concept{Theorist}(\text{Bob})$ is another fact with the foil $\concept{AI\text-Expert}(\text{Bob})$ since Bob is asserted to be a theorist instead of an AI expert.
		\end{example}
	\end{comment}
	%
	%\ym{R1: this section uses fact and foil informally and then defines based on these a CQ; and Sec. 3.1 where it is a bit difficult to grasp what is intended without having details on what a contrastive explanation is}
	%
	\subsubsection{Instantiation for Facts and Foils}
	We instantiate $P$ and $Q$ in the contrastive question $(P,Q)$ with DLs concept assertions.
	Precisely, we let $P\dfn C(x)$ be a fact for a concept $C$ and individual $x$.
	Following the previous discussion, %Kean~\cite{kean1998characterization}, 
	%sit appears that 
	the three most intriguing foils for contrasting $C(x)$ include $\neg C(x), C(y)$,  and $D(x)$. % yields a reasonable foil for $C(x)$.
	The approach to contrasting a fact against its negation involves computing justifications and has been explored in-depth~\cite{kalyanpur2006debugging,horridge2011justification}.
	Therefore, the classical justifications for ABox assertions can be considered a special case of our problem to computing CEs. %as  $ \AbdAll(Q)=\emptyset$. %as $\calK$ is consistent. %and therefore there is no abductive explanation for $Q$.
	However, the remaining two types of foils, i.e., $C(y)$ and $D(x)$, have not gained (enough) attention before.
	Consequently, we focus on the cases when a foil is obtained from a fact $C(x)$ by diverging the individual element $x$ (\emph{entity} contrasts) and when a concept is changed, resulting in $D(x)$ (\emph{concept} contrasts).
	These instantiations arise naturally for the contrast as the two cases share certain \emph{common features} with the given fact.
	The notion of entity contrasts bears some resemblance to the existing work on explaining the difference between two individuals~\cite{Koopmann26} within a KB.
	Nevertheless, the underlying definitions differ substantially as the aforementioned work only considers ABox assertions to explain the difference.
	Moreover, our approach aims to explain a foil within the context of a justification for the given fact, whereas the work by \cite{Koopmann26} seeks to minimize the differences between fact and foil in terms of the ABox assertions for both, which may incur significant computational cost.
	%Finally, no prior work targets the notion of contrastive problems involving \emph{concept contrast}.

	\paragraph{Entity Contrasts.}
	Corresponding to facts of the form $C(x)$, we let foils $C(y)$, where $x,y$ are two distinct individuals and $C$ is a concept.
	Choosing a contrasting individual $y$ to compare the assertions $C(y)$ and $C(x)$ highlights the differences in relevant characteristics encompassed within the concept $C$. 
	That is, the CQ $(C(x), C(y))$ allows one to contrast an individual inside $C$ with the one not in $C$.
	This emphasizes specific features of $C$, aiding in understanding why certain entities are asserted to be positive instances in $C$. 
	%The following example depicts the relevance of entity contrasts.
	%
	\begin{example}\label{ex:entiy-contrast}
		In $\calKrun$ (Ex.~\ref{ex:running-prelim}),
		$\concept{Hired}(\text{Alice})$ is a fact.
		For a skeptical user interested in finding the differences between Alice and Bob regarding the hiring process, $\concept{Hired}(\text{Bob})$ serves as an entity foil. \qedexample
	\end{example}
	\paragraph{Concept Contrasts.}
	We propose a foil $D(x)$ for a fact of the form $C(x)$, where $C$ and $D$ are two distinct concepts and $x$ is an individual.
	Introducing a contrasting concept for the same individual enables exploring the boundary between different concepts within a KB. 
	%That is, contrasting $C(x)$ with the assertion $D(x)$ clarifies situations where $x$ is asserted to be in $C$ but not in $D$ in the given KB.
	%This allows a deeper understanding of the subtle relationships between two concepts.
	%
	%Example~\ref{ex:concept-prelim} depicts a scenario emphasizing  the relevance of entity contrasts.
	
	\begin{example}\label{ex:concept-prelim}
		In $\calKrun$ (Ex.~\ref{ex:running-prelim}), %upon finding out that Bob was not classified as an AI expert, 
		% in a slightly different setting.
		%Upon receiving the explanation for why Bob wasn't invited for the interview, one realizes that he is categorized as a theorist (say, the system automatically categorized people based on the conference types they mostly publish in).
		%Now, 
		one may want to know further why Bob is classified as a theorist instead of an AI expert.
		%    Our KB $\calK$ is specified as before.
		Then, corresponding to the fact ``$\concept{Theorist}(\text{Bob})$'', the assertion ``$\concept{AI\text-Expert}(\text{Bob})$'' serves as   a concept foil. \qedexample
	\end{example}
	%
	%We illustrate a potential use-case of finding an explanation for an assertion against its concept contrast. 
	%In the area of class expression learning~\cite{}, the goal is to construct a query from positive and negative examples. Then, a skeptic user may inquire why the answer was a  query $C$ instead of another similar and expected query $D$? 
	%
	\subsection{Formalizing Contrastive Explanations}
	%We begin by presenting an example to provide an intuitive understanding of what constitutes a contrastive explanation according to Lipton~(\cite{lipton1990contrastive}).
	%\begin{example}\label{ex:running-cex-intuition}
	%	Consider a CQ $(\concept{Hired}(\text{Alice}),\concept{Hired}(\text{Bob}))$ in the explanation context $\calKrun$.
	%
	%	As a contrastive explanation for why Alice is Hired instead of Bob, one expects the difference that Alice publishes in ML, an AI topic. Thus, Alice is an AI expert, while Bob is not. \qedexample
	%\end{example}
	
	%Example~\ref{ex:running-cex-intuition} captures our intuitive understanding of the difference between Alice and Bob in terms of known assertions about both.
	
	%\todo[inline]{	\textbf{Structure:} \textbf{Issue 1} (P1) discussion, trivial solution (symdif), what is lacking? \textbf{Issue 2} and \textbf{Issue 3} same. \textbf{Our definition} for CEs. Move additional examples to appendix. \\}
	
	We build upon the framework of Lipton~(\cite{lipton1990contrastive}) and first present an account on what constitutes a CE.
	Essentially, contrasting $P$ and $Q$ amounts to citing a difference between the reasons for $P$ and the missing facts to entail $Q$ in a KB $\calK$.
	This so-called \emph{difference condition} \cite{lipton1990contrastive,Miller_2021} requires identifying a \textbf{cause} for $P$ and the \textbf{absence} of \textbf{corresponding events} in the history of $Q$'s \textbf{failure}.
	%We aim to define CEs in KBs by providing meaningful translations of each highlighted term.
	%The principle of difference condition % first appeared in the original work by %Kean~\cite{kean} and 
	%Lipton~\cite{lipton}. It 
	%requires CEs to include axioms that differentiate $P$ and $Q$ in a KB $\calK$.
	%Observe that Lipton did not give an account on how contrastive explanation can be computed because his interest was only in modeling such explanations. Essentially, Lipton only highlighted the characteristics of formalism to qualify for contrastive explanations. Kean then implemented some of these ideas by Lipton and defined CEs based on model-theoretic notions. We adapt Lipton's principles to DL KBs and define CEs. 
	
	\begin{property}[Difference condition~\cite{lipton1990contrastive}]\label{prop:diff}
		A contrastive explanation for $(P, Q)$ cites the difference between the reasons for $P$ and $Q$.
	\end{property}
	%Nevertheless, both authors do not assume specifically whether $Q$ is not true.
	%\ym{can we formulate a precise statement out of it? a condition, a formula that one can check whether is true or not. E.g., given R1 and R2 as reasons for P and Q: maybe their set difference is enough as it cites the difference. so the pair (diff(P,Q), diff(Q,P)). Then can be motivated from Lipton, Kean and Miller as they provide an instantiation for it }
	
	%We present a scheme for defining CEs (Def.~\ref{def:CeXp}), where the principle of difference condition is achieved by combining the justifications for $P$ and $Q$.
	%Precisely, we compute a justification %$\calJ(P)$ for $P$ in $\calK$.
	The difference condition (P1) allows one to formulate a contrastive explanation scheme by combining the justifications for $P$ and $Q$.
	Since $Q$ is not true in $\calK$, a justification $\calJ(Q)$ is determined by an abductive hypothesis $\calH(Q)$ in $\calK$.
	%Let $\calK$ be a KB, $(P,Q)$ be a CQ, and  $\calJ(P)$ and $\calJ(Q)$ be justifications for $P$ and $Q$, respectively.
	%Then, a contrastive explanation for $(P,Q)$ is given by \emph{appropriately} joining justifications for $P$ and $Q$ in $\calK$.
	We define a general CE scheme for ``$P$ instead of $Q$'' based on the \emph{difference condition} ``$\difference$'' as a set of axioms $\calE \dfn \calJ(P) \difference \calJ(Q)$.
	%Next, we instantiate the difference condition via specific operations on the two sets.
	%
	\begin{comment}
		\begin{definition}[Contrastive explanations scheme]\label{def:CeXp}
			Let $\calK=(\calT,\calA)$  be an explanation context and $(P,Q)$ be a CQ. 
			A \emph{contrastive explanation} for ``$P$ instead of $Q$'' based on the difference condition ``$\difference$'' in $\calK$ is a set $\calE$ of axioms such that 
			$\calE \dfn \calJ(P) \difference \calJ(Q)$ 
			where $\calJ(P)$ and $\calJ(Q)$ are justifications for $P$ and $Q$ in $\calK$, respectively.
		\end{definition}
	\end{comment}
	%
	Our general definition translates Lipton's difference condition to facts and foils formulated in DLs, except it leaves room for the precise interpretation of the difference condition ``$\difference$''.
	%Let $\calK$ be a KB and $R= (P,Q)$ be a CI.
	%
	%For a difference operation $\difference$, an explanation context $\calK$, and a CQ $R\dfn(P,Q)$, we denote by $\ContAll{\difference}(R,\calK)$ the set of all CEs for $R$ in $\calK$.
	%We consider various instantiations for ``$\difference$'' and their associated challenges. %in the context of DL KBs.
	%
	As our first instantiations, we consider the symmetric difference (defined as $A\symdif B \dfn (A\setminus B) \cup (B\setminus A)$).
	This defines a CE $\calE^\symmcontrast \dfn \calJ(P) \symdif \calJ(Q)$ for $(P,Q)$.
	
	% of ``$\difference$'', .
	\begin{comment}
		\begin{definition}\label{def:symdif}
			Let $\calK=(\calT,\calA)$ be a KB and $(P,Q)$ be a CQ. 
			A \emph{contrastive explanation} for $(P,Q)$ in $\calK$ based on the symmetric difference is the set of axioms
			$\calE^\symmcontrast \dfn \calJ(P) \symdif \calJ(Q)$ 
			where $\calJ(P)$ and $\calJ(Q)$ are some justifications for $P$ and $Q$ respectively.
		\end{definition}
	\end{comment}
	
	%\ym{R2: Definition 3 implements Principle 1 (P1). Then, Definition 4 instantiates Definition 3 with symmetric difference operator. I wonder why the authors didn't introduce Definition 4 directly without Definition 3 because Definition 3 is not used for something else.}
	%Then, the contrastive explanations for $(P,Q)$ in a KB $\calK$ based on the symmetric difference are defined as $\calE^\symmcontrast  \dfn \calJ(P)\symdif \calJ(Q)$ where $\calJ(X)\in \JustAll(X,\calK)$ for  $X\in\{P,Q\}$.
	%
	%$\ContAll{\symmcontrast}  \dfn \{\calJ(P)\symdif \calJ(Q)\mid \calJ(X)\in \JustAll(X,\calK), X\in\{P,Q\}\}.$.
	
	\begin{example}\label{ex:running-symdif}
		Consider the CQ $(\concept{Hired}(\text{Alice})$, $ \concept{Hired}(\text{Bob}))$ in the KB $\calKrun$. % (Ex.~\ref{ex:Hired-entity-prelim}).
		\begin{comment}
			Recall that $\calJ(\concept{Hired}(\text{Alice}))$ contains following axioms:  
			\begin{enumerate}
				\item $\concept{Qualified}\sqcap \concept{AI\text-Expert} \subsum \concept{Hired}$,  
				
				\item $ \exists\concept{publishesIn}.\concept{AI}\subsum \concept{AI\text-Expert}$,
				
				\item $\concept{Qualified} (\text{Alice})$, $\concept{publishesIn} (\text{Alice}, \text{ML})$, $\concept{AI}(ML)$.  
			\end{enumerate}
			%
			For $i\in\{1,2\}$, $\calJ_i(\concept{Hired}(\text{Bob}))$ (Ex.~\ref{ex:running-abd}) specifies two justifications for Bob, as follows.  
			%Each $\calJ_i$ includes axioms ($1$)--($i$) from above and additionally the following assertions, respectively:
			\begin{itemize}
				\item
				$\concept{Qualified} (\text{Bob})$,   {$\concept{publishesIn} (\text{Bob}, \text{ML})$}, $\concept{AI}(ML)$, \\ $ \exists\concept{publishesIn}.\concept{AI}\subsum \concept{AI\text-Expert}$, $\concept{Qualified}\sqcap \concept{AI\text-Expert} \subsum \concept{Hired}$.
				
				\item
				$\concept{Qualified} (\text{Bob})$,   {$\concept{worksIn} (\text{Bob}, \text{ML})$}, $\concept{AI}(ML)$,, \\
				$ \exists\concept{worksIn}.\concept{AI}\subsum \concept{AI\text-Expert}$,
				$\concept{Qualified}\sqcap \concept{AI\text-Expert} \subsum \concept{Hired}$.  
			\end{itemize}
		\end{comment}
		%
		Using the justifications for fact (Ex.~\ref{ex:running-just}) and foil (Ex.~\ref{ex:running-abd}), the CQ $(\concept{Hired}(\text{Alice})$, $ \concept{Hired}(\text{Bob}))$ admits the following CEs (among others).  
		%\an{Does it make sense to underline the element of $\calH$ to make clear that  $\calE_{1,2}^\symmcontrast$ actually makes sense.}
		\begin{itemize}
			\item $\calE_1^\symmcontrast =  \{\concept{Qualified} (\text{X}), \concept{publishesIn} (\text{X}, \text{ML}) \mid \text{X}\in \{\text{Alice}, \text{Bob}\} \}$.
			
			\item $\calE_2^\symmcontrast = \{\concept{Qualified} (\text{Alice}), \concept{Publishes} (\text{Alice}, \text{ML})$,
			\item[] \qquad\qquad $\concept{Qualified} (\text{Bob}), \concept{worksIn} (\text{Bob}, \text{ML}),$
			\item[]\qquad \qquad $\exists\concept{publishesIn}.\concept{AI}\subsum \concept{AI\text-Expert}, \exists\concept{worksIn}.\concept{AI}\subsum \concept{AI\text-Expert}\}$. \qedexample  
		\end{itemize}
	\end{example}
	%
	%$\calE^\symmcontrast$ includes axioms exclusively for either $P$ or $Q$, thus satisfy the difference condition.
	
	Note that \cite{lipton1990contrastive} considers a CE to include the cause of $P$ and the \textbf{absence} of an event in the history of $Q$'s \textbf{failure}.
	Therefore, a direct translation of Lipton's definition to DLs applies the difference condition to the justification $\calJ(P)$ and the abductive hypothesis $\calH(Q)$ for a CQ $(P,Q)$.
	I.e., Lipton's contrastive explanations are defined as sets 
	$\calE^{\hypcontrast}  \dfn \calJ(P)\symdif \calH(Q)$.  %where $\calJ(P)\in \JustAll(P,\calK)$ and $ \calH(Q)\in\AbdAll(Q,\calK)$.
	Since $\calJ(P)\subseteq \calK$ and $\calH(Q)\cap \calK=\emptyset$, we have $\calE^\hypcontrast \dfn \calJ(P)\cup \calH(Q)$.
	In the following, we present a short exposition on how $\calE^{\hypcontrast}$ fails to provide a promising definition  in DLs.
	%
	%We highlight this further via the following example.
	\begin{comment}
		\begin{example}\label{ex:contrast-issue1}
			Consider the KB from Example~\ref{ex:prelim-just} together with the fact $P\dfn C(x)$ and foil $Q\dfn C(y)$.
			Recall that $\calJ(P)= \{A_1(x),A_2(x), A_1\sqcap A_2 \subsum C \}$ is a justification for $P$ in $\calK$.
			Moreover, %$\calJ({Q})= \{B_2(y), B_1(y), B_1\sqcap B_2 \subsum C \}$ %and 
			$ \calJ({Q})= \{A_1(y), A_2(y), A_1\sqcap A_2 \subsum C \}$ 
			is a justification for $Q$ in $\calK$.
			%
			Notice that $\calJ(P)\cap \calJ(Q) =\{A_1\sqcap A_2 \subsum C\}$ and therefore $\calE^\symmcontrast = \{A_i(x), A_i(y) \mid i=1,2\}$.
		\end{example}
	\end{comment}
	%
	\begin{example}\label{ex:running-contrastH}
		The CQ $(\concept{Hired}(\text{Alice})$, $ \concept{Hired}(\text{Bob}))$ admits the following CEs $\calE^\hypcontrast_i$ for $i\in \{1,2\}$ in $\calKrun$.
		Each $\calE^\hypcontrast_i$ contains the TBox axioms listed in \textbf{(T1)}--\textbf{(T2)} from Example~\ref{ex:running-prelim} and the following ABox assertions.  
		\begin{itemize}
			\item $\calE_1^\hypcontrast=  \{ \concept{Qualified} (\text{Alice}), \concept{publishesIn} (\text{Alice}, \text{ML}), \concept{AI}(ML)$,
			\item[] \qquad\qquad  $\concept{publishesIn} (\text{Bob}, \text{ML})\}$.
			
			\item $\calE_2^\hypcontrast= \{ \concept{Qualified} (\text{Alice}), \concept{publishesIn} (\text{Alice}, \text{ML}), \concept{AI}(ML)$, 
			\item[] \qquad \qquad $\concept{worksIn} (\text{Bob}, \text{ML})\}$. \qedexample
		\end{itemize}
	\end{example} 
	
	The explanation $\calE^\hypcontrast$ includes true axioms required to infer $P$ and missing axioms that highlight the failure of $Q$.
	However, $\calE^\hypcontrast$ alone may be insufficient to fully explain how $Q$ can be inferred solely from the missing axioms for the foil.
	The following example depicts such a corner case.
	
	\begin{example}\label{ex:issue-lip}
		%	Let $\calK' \dfn \calKrun\setminus \{\neg \concept{AI}(\text{\text{Math}})\}$ be a KB.
		Let $\calH_c = \{\concept{AI}(\text{\text{Math}})\}$ be our abductive hypothesis for $\concept{Hired}(\text{Bob})$ in $\calKrun$.
		A CE $\calE_c^\hypcontrast$ for the CQ from our running example in $\calKrun$ contains the TBox axioms \textbf{(T1)}--\textbf{(T2)} from Example~\ref{ex:running-prelim} and the following assertions. \vspace{-.1cm}
		\begin{itemize}
			\item $\calE_c^\hypcontrast = \{ \concept{Qualified} (\text{Alice}), \concept{publishesIn} (\text{Alice}, \text{ML}), \concept{AI}(ML), \alert{\concept{AI}(\text{\text{Math}})}$\}. \vspace{-.1cm}
		\end{itemize}
		
		However, the assertion $\concept{AI}(\text{\text{Math}})$ in this CE does not clarify why is $\concept{Hired}(\text{Bob})$ missing in $\calKrun$.
		Informally, one additionally requires (at least) the assertion $\concept{publishesIn}(\text{Bob},\text{\text{Math}})$ to understand how $\concept{Hired}(\text{Bob})$ is entailed by $\concept{AI}(\text{\text{Math}})$ in $\calK'$, which is part of its justification but not included in $\calE_c^\hypcontrast$. \qedexample \vspace{-.1cm}
	\end{example}
	
	%Observe that the difference condition merely requires highlighting the difference between justifications for $P$ and $Q$ in $\calK$ and does not distinguish true axioms in $\calJ(Q)\cap \calK$ from those added by an abductive hypothesis for $Q$.
	%Furthermore, including only the hypothesis yields $\calE^{\hypcontrast}$, which we already discussed above.
	We observe that relying solely on the difference condition may not fully deliver on the promise of explaining the difference between facts and foils.
	%    As the examples above highlight, answering a contrastive question by satisfying P1 may not always be fruitful in DL KBs.
	%    To be precise, the CeXp $\{A_1(x), A_2(x), A_1\sqcap A_2 \subsum C, B_2(y) \}$  for $(P,Q)$ does not inform how $Q$ is deducable from $B_2(y)$.
	This holds since $\calE^\symmcontrast$ satisfies P1, yet it does not distinguish between true and missing assertions for $Q$ ($Q$'s \textbf{failure}), whereas $\calE^\hypcontrast$ only contains an abductive hypothesis for $Q$ instead of all the axioms necessary to entail $Q$.
	%The symmetric difference between a justification for $P$ and abductive hypotheses for $Q$ is effective only in certain scenarios (as depicted in Example~\ref{ex:contrast-issueH}). 
	%    However, it fails to capture the essence of contrastive explanations as a difference.
	%
	%This motivates additional instantiations for the difference condition to account for the difference between \emph{true} versus \emph{hypothesized} assertions 
	%and \emph{corresponding} assertions 
	%about a foil in a CQ.

	\paragraph{DL-Specific Instantiations for CEs.}
	We now propose two properties desirable for an instantiation of the difference condition ``$\difference$'' to fulfill for CQs formulated in DLs.
	These properties model Lipton's intuition of the \emph{similarity} and \emph{difference} between the justifications for $P$ and $Q$.
	In essence, P1 accounts for axioms common in explaining the fact $P$ and foil $Q$, which are irrelevant in explaining the difference between $P$ and $Q$.
	% Nevertheless, using an abductive hypothesis $\calH$ instead of a justification $\calJ_Q$ for Q will give $\calJ_P$ as the difference since $\calH\cap\\calJ =\emptyset$.
	However, the justification $\calJ(Q)$ contains both true and missing assertions about $Q$, which should be distinguished to highlight the \textbf{failure} of $Q$ in $\calK$ (Property~\ref{prop:missing}).
	Moreover, the two justifications $\calJ(P)$ and $\calJ(Q)$ may contain \textbf{similar} axioms, irrelevant to differentiate a fact from a foil (Property~\ref{prop:similar}).
	%We will consider this similarity of axioms in Principle~$3$.
	%
	We explain both principles against their possible instantiations. % for ``$\difference$''. %as depicted in Table~\ref{tab:diffs}.
	
	\begin{property}[Known versus assumed knowledge]\label{prop:missing}
		A CE for $(P,Q)$ differentiates true assertions in the reasoning about $Q$ %in $ \calK$ (i.e., $\calK\setminus\calJ_Q$) 
		from missing assertions for $Q$.% ($\calH_Q$).
	\end{property}

	Principle~\ref{prop:missing} (P2) aims at answering the CQ $(P,Q)$ by elaborating on the missing assertions causing $Q$'s failure. 
	Thus, an instantiation satisfies P2 if true and missing assertions about $Q$ can be differentiated in the given CE.
	It can be seen that the CEs $\calE^\symmcontrast$ does not satisfy P2, whereas, $\calE^\hypcontrast$ satisfies P2 trivially.
	
	%In particular, P2 emphasizes underlining what would make $Q$ true in $\calK$ and corresponds to Lipton's contrastive explanations (jointly with P1) by considering the \emph{missing facts in the history of the failure of Q}.
	%
	\begin{comment}        
		Consider Kean's~\cite{kean1998characterization} definition of contrastive explanation, defined as 
		$\calE^{\hypneg}  \dfn \calJ(P)\symdif \neg\calH(Q) $ where $ \calJ(P)\in \JustAll(P,\calK) $, $\calH(Q)\in\AbdAll(Q,\calK)$ and
		$\neg A$ denotes the negation of all the assertions in $A$. \as{Why A here?}
		%   Then, explanations in $\ContAll\hypneg$ satisfy P2 as the missing assertions about $Q$ are negated in the hypothesis.
		%    However, as we have seen before only hypothesis for $Q$ may fail 
		However, the definition $\ContAll\hypneg$ fails for expressive DLs that allow negation as a language construct.
		Consequently, Kean's approach to contrastive explanations can not be adapted to facts and foils formulated in Description logicss. \ym{not super interesting}
	\end{comment}
	
	We next instantiate $\difference$ based on the symmetric difference with partition ($\calE^{\symmpart\text{-f}}$) by further separating true assertions from assumed assertions about $Q$.
	Let $\calJ(Q)\in \JustAll(Q,\calK)$ and $\calH(Q)$ be the abductive hypothesis for $Q$ in $\calK$ such that $\calH(Q)\subseteq \calJ(Q)$.
	We denote by $\calJ_t(Q)\dfn \calJ(Q)\cap \calK$ the axioms in $\calJ(Q)$ \emph{true} for $Q$.
	Then, define CEs $\calE^{\symmpart\text{-f}}$ by further partitioning the justifications for $Q$.
	
	\begin{definition}\label{def:seg}
		Consider a KB $\calK=(\calT,\calA)$ and a CQ $(P,Q)$. %\an{This is not a sentence.}
		A \emph{contrastive explanation} for $(P,Q)$ in $\calK$ based on the symmetric difference with full partition is a tuple $\calE^{\diffpair\text{-f}}\dfn \langle \calJ(P)\setminus \calJ_t(Q), \calJ_t(Q)\setminus\calJ(P), \calH(Q)\rangle $ 
		where $\calJ(X)\in \JustAll(X,\calK)$ for $X\in\{P,Q\}$ with $\calJ_t(Q)=\calJ(Q)\cap\calK$ and $\calH(Q) =  \calJ(Q)\setminus \calK$.
	\end{definition}
	
	\newcommand{\fact}{\text{fact}}
	\newcommand{\foil}{\text{foil}}
	\newcommand{\truefoil}{\text{foil+}}
	\newcommand{\missingfoil}{\text{foil-}}
	
	$\calE^{\diffpair\text{-f}}$ differentiates axioms for $P$ and $Q$ as well as true assertions ($\calJ_t(Q)$) from the missing assertions ($\calH(Q)$) about $Q$.
	For brevity, we rename each individual component in a CE $\calE$ as $\calE_\fact$ (axioms for the fact), $ \calE_\truefoil$ (true axioms for the foil), and $\calE_\missingfoil$ (missing axioms for the foil), respectively.
	%As a result, the definition $\calE^{\diffpair\text{-f}}$ satisfies P1 and P2.
	%    We find it worth noticing that the definition $\calE^\diffpair$ resembles the contrastive explanation for ASPs by Eiter~et~al.\cite{} since the authors also distinguish true assertions in KB from missing assertions about $Q$ by presenting a tuple as an explanation.
	
	\begin{example}\label{ex:running-diffpair}
		For our running example, two CEs with full partition are \linebreak $\calE_1^{\diffpair\text{-f}}=\langle \calE_\fact,\calE_\truefoil, \calE_\missingfoil\rangle$ and $\calE_2^{\diffpair\text{-f}}=\langle  \calE'_\fact,\calE'_\truefoil, \calE'_\missingfoil\rangle$ where,  
		
		\begin{description}
			\item[$\calE_\fact$] $= \{\concept{Qualified} (\text{Alice}), \concept{publishesIn} (\text{Alice}, \text{ML})$, 
			
			\item[$\calE_\truefoil$] $=\{\concept{Qualified} (\text{Bob})\}$,
			
			\item[$\calE_\missingfoil$] $=\{\concept{publishesIn} (\text{Bob}, \text{ML})\}$, and
			\vspace{.20em}
			
			\item[$\calE'_\fact$]  $= \{\concept{Qual} (\text{Alice}), \concept{publishesIn} (\text{Alice}, \text{ML}), \exists\concept{publishesIn}.\concept{AI}\subsum \concept{AI\text-Expert}\}, $
			
			\item[$\calE'_\truefoil$] $=\{\concept{Qualified} (\text{Bob}), \exists\concept{worksIn}.\concept{AI}\subsum \concept{AI\text-Expert}\}$,
			
			\item[$\calE'_\missingfoil$] $=\{\concept{worksIn} (\text{Bob}, \text{ML})\}$.
		\end{description}
		%Intuitively, the two explanations can be read as follows:
		%\begin{itemize}
		%	\item Alice is Hired because she is qualified and publishes in ML. Publishing in a topic makes one specializes in that topic. Although Bob is also qualified, he does not (according to $\calK$) specialize in ML.
		%	\item Alice is Hired because she is qualified and publishes in ML. Although Bob is also qualified, he does not (according to $\calK$) publish in ML. \qedexample 
		%		\item Alice is Hired because she is qualified and publishes in ML, which is an AI topic. Although Bob is qualified and publishes in complexity, it is not (known as) an AI topic.
		%\end{itemize}
	\end{example} 
	%It is worth mentioning that $\calE^{\diffpair\text{-f}}$ also differentiates axioms about $P$ from $Q$; therefore, it also satisfies P1.
	%The missing assertions (hypothesis) for $Q$ in Example~\ref{ex:running-diffpair} are \emph{negated} in the interpretation. 
	%
	\begin{comment}
		%Kean~(\cite{kean1998characterization}) defines CEs as $\calE^{\hypneg}  \dfn \calJ(P)\symdif \neg\calH(Q)$, where $\neg\calH(Q)$ denotes the negation of axioms in $\calH(Q)$.
		%Since Kean also considers an abductive hypothesis (rather than justifications) for the foil, a similar issue arises as with Lipton's definition, rendering $\calE^{\hypneg}$ insufficient for DLs.
		\begin{remark}
			$\calE^{\diffpair\text{-f}}$ satisfies P1 and P2. \ym{maybe mention this result at the end for all the considered definitions together}
		\end{remark}
	\end{comment}
	
	A final challenge in our instantiation concerns the ``common'' assertions. 
	The two assertions about Alice and Bob being qualified are both required and true in $\calKrun$.
	As a result, these should be treated as \emph{similar} since they do not differentiate the fact from the foil.
	Our final principle allows us to treat \emph{similar} (yet unequal) axioms for $P$ and $Q$. 
	Note that this characteristic %has not been explored previously, although it can also 
	corresponds to Lipton's argument of citing the cause of $P$ with no \textbf{corresponding event} in the history of the failure of $Q$.
	That is, the difference condition removes the same axioms in an explanation and similar ones for a fact-foil pair. 
	%is also desirable for KBs formulated in Description logicss, as we explain in the following.
	
	\begin{property}[Accounting for similar axioms]\label{prop:similar}
		A CE for $(P,Q)$ does not differentiate between ``similar'' true assertions about $P$ and $Q$. % obtainable via renaming. % of the assertions about $Q$.
	\end{property}
	
	%\begin{example}
	%	The assertions $\concept{Qualified} (\text{Alice})$ and $\concept{Qualified} (\text{Bob})$ are similar for the CQ $R= (\concept{Hired} (\text{Alice}), \concept{Hired} (\text{Bob}))$.
	%	Both assertions are true as required, and do not explain the difference between the given fact and foil. %$\concept{Hired} (\text{Alice})$ and $ \concept{Hired} (\text{Bob})$ in $\calK$. 
	%	\qedexample
	%\end{example}
	
	The standard set difference does not account for such similar axioms between $\calJ(P)$ and $\calJ(Q)$.
	Indeed, each CE based on the set difference contains assertions $\concept{Qualified} (\text{Alice})$ \& $\concept{Qualified} (\text{Bob})$.
	%
	%\begin{remark}
	%	None of the CEs $\calE^\symmcontrast, \calE^{\symmpart\text{-p}} , \calE^{\diffpair\text{-f}}$ satisfy P3. Further, the CEs by Lipton ($ \calE^\hypcontrast$) %and Kean ($\calE^\hypneg$) 
	%	satisfy P3 trivially as $\calE^\hypcontrast$ does not include true assertions about $Q$ present in an explanation context $\calK$.
	%\end{remark}
	%
	Moreover, the definition of \emph{similarity} between axioms depends on how a given fact and foil are related.
	We now specify criteria for similarity when entity and concept foils are considered.
	Intuitively, for entity CQs $(C(x), C(y))$, two ABox axioms $E(x)$ and $E(y)$ are considered the same for any concept $E$.
	%That is, if $E(x)$ (resp., $E(y)$) is required to entail $C(x)$ ($C(y)$) and $\{E(x),E(y)\}\subseteq \calK$, then a contrastive explanation treats the two assetions $E(x)$ and $E(y)$ as similar.
	Likewise, for concept CQs $(C(x), D(x))$, two TBox axioms $E\subsum C$ and $E\subsum D$ can be treated similar.
	%We systematically formalize this intuition next. % that obtains an axiom by renaming individuals or concepts.
	
	\begin{definition}[Renamed axioms]\label{def:renamed-axioms}
		Let $\alpha$ be an axiom %(TBox or ABox)
		and $\gamma$ be an individual (resp., a concept name) in $\alpha$.
		Then, for an individual (concept name) $\delta$: $\alpha[{\gamma\mapsto\delta}]$ denotes the axiom obtained from $\alpha$ by renaming every occurrence of $\gamma$ by $\delta$.
	\end{definition}
	
	\begin{example}
		For $\alpha \dfn \concept{Qualified} (\text{Alice})$,  $\alpha[{\text{Alice}\mapsto\text{Bob}}]= \concept{Qualified} (\text{Bob})$.
		For $\beta =\concept{Theorist} (\text{Bob})$, $\beta[{\text{Theorist}\mapsto\text{AI-Expert}}] = \concept{AI\text-Expert} (\text{Bob})$.\qedexample
	\end{example}
	
	To obtain our final instantiation, we modify the set difference operator to obtain $\contrastdiff{X}{Y}$, allowing the removal of \emph{similar} axioms.
	
	\begin{comment}
		\begin{definition}
			Let $R=(P,Q)$ be a CeXp-instance with a fact $P=C(x)$ and $Q\in \{C(y), D(x)\}$ be a foil.
			Moreover, let $\calJ_P$ and $\calJ_Q$ be justifications for $P$ and $Q$ in $\calK$.
			Then, we define by $\contrastdiff$ the modified set-difference between the two justifications for $P$ and $Q$, specified as follows:
			\begin{itemize}
				\item If $Q= C(y)$, then $\calJ_P\contrastdiff \calJ_Q\dfn \{\alpha\in \calJ_P\mid  \alpha\not\in \calJ_Q \text{ and } \alpha \neq E(x) \text{ for concept } E \text{ with } E(y) \in \calJ_Q\}$.
				\item If $Q= D(x)$, then $\calJ_P\contrastdiff \calJ_Q\dfn \{\alpha\in \calJ_P\mid  \alpha\not\in \calJ_Q \text{ and } \alpha \neq E(x) \text{ for concept } E \text{ with } \calJ_P\models E \subsum C, \calJ_Q\models E \subsum D\}$.
			\end{itemize}
		\end{definition}
	\end{comment}
	
	\begin{definition}\label{def:sim-axioms}
		Let $\calK$ be a KB, $C,D$ be concepts, and $x,y$ be individual elements. 
		Then, for two subsets $\calJ_1,\calJ_2\subseteq \calK$, and $(X,Y)\in\{(x,y),(C,D)\}$, $\calJ_1\contrastdiff{X}{Y} \calJ_2$ denotes the \emph{set-difference between $\calJ_1$ and $\calJ_2$ relative to $(X,Y)$}, defined as follows:  
		\begin{itemize}
			\item $\calJ_1\contrastdiff{X}{Y} \calJ_2\dfn \{\alpha\in \calJ_1\mid  \alpha\not\in \calJ_2 \text{ or } \alpha[x\mapsto y] \not\in \calJ_2\}$, if $(X,Y)= (x, y)$.  
			\item $\calJ_1\contrastdiff{X}{Y} \calJ_2\dfn \{\alpha\in \calJ_1\mid  \alpha\not\in \calJ_2 \text{ or } \alpha[C\mapsto D] \not\in \calJ_2\}$, if $(X, Y)= (C,D)$.
			%Finally, we define $\calJ_1\contrastdiff{X}{Y} \calJ_2\dfn\emptyset$ in the remaining cases (if $X$ and $Y$ has different types).
		\end{itemize}
	\end{definition}
	\begin{comment}
		\begin{example}[Continue]\label{ex:contrast-issue2}
			Consider $\calJ_P= \{A_1(x),A_2(x), A_1\sqcap A_2 \subsum C \}$ as a justification for $P$ in $\calK$, and
			$ \calJ_{Q}= \{A_1(y), A_2(y), A_1\sqcap A_2 \subsum C \}$ a justification for $Q$ in $\calK$.
			Then, $\calJ_P\setminus \calJ_Q = \{ A_1(x),A_2(x)\}$ and $\calJ_Q\setminus \calJ_P = \{ A_1(y),A_2(y)\}$.
			Notice that $A_1(x), A_1(y) \in \calK$. Therefore, the two axioms do not differentiate the entailment of $P$ and the failure of $Q$ in $\calK$.
			Intuitively, to explain why $C(x)$ is true instead of $C(y)$, it suffices to note that $A_2(x)$ is true in $\calK$  whereas $A_2(y)$ is not.
			Put differently, although $A_1(x)$ is necessary to infer $P$, a similar axiom ($A_1(y)$) for $Q$ is also true in $\calK$.
			Therefore, $A_1(x)$ does not explain a difference between $P$ and $Q$ in $\calK$.
		\end{example}
	\end{comment}
	
	The set difference operator $(\contrastdiff{X}{Y})$ removes axioms that are \emph{similar} under renaming of concepts (resp., entities) to differentiate facts and foils.
	This intuition corresponds to including axioms required to explain $P$, such that their \textbf{corresponding} axioms for $Q$ are missing in the \textbf{history} of $Q$.
	%Let $(P,Q)$ be a CQ with the fact $P=C(x)$ and foil $Q\in \{C(y),D(x)\}$.
	%Then, by $\contrastdiff{P}{Q}$ we denote the set difference relative to $(P,Q)$ defined as: (1) $\contrastdiff{P}{Q}= \contrastdiff{x}{y}$, if $Q=C(y)$ and (2) $\contrastdiff{P}{Q}= \contrastdiff{C}{D}$ if $Q=D(x)$.
	For convenience, we denote the modified set difference as ``$\tiltedDagger$''.
	We now present our final proposal. % for  CEs in DLs that satisfies all the mentioned principles (P$1$--P$3$).
	As before, we find it convenient to denote individual components of a CE as axioms relevant for either fact or foil.
	
	\begin{definition}\label{def:cexp-modified} %(Contrastive Explanations with modified set difference.)
		Let $\calK$ be a KB and $(P,Q)$ be a CQ with a fact $P=C(x)$ and foil $Q\in \{C(y),D(x)\}$. 
		Moreover, let $\calJ(P)\in \JustAll(P,\calK)$, and $\calJ(Q)\in\JustAll(Q,\calK)$ with true assertions $\calJ_t(Q)\dfn \calJ(Q)\cap\calK$ and hypothesis $\calH(Q)=\calJ(Q)\setminus \calK$.
		A \emph{CE} for $(P,Q)$ in $\calK$ based on \emph{partition with set difference relative to $(P,Q)$} is a tuple 
		$\calE^\difftuple\dfn \langle \calE_\fact,\calE_\truefoil,\calE_\missingfoil \rangle$, where  
		\begin{itemize}
			\item $\calE_\fact \dfn \calJ(P){\tiltedDagger} \calJ_k(Q)$ \qquad (axioms only relevant for $P$),
			\item $\calE_\truefoil \dfn  \calJ_t(Q){\tiltedDagger}\calJ(P)$ \qquad  (true axioms only relevant for $Q$),
			\item $\calE_\missingfoil \dfn \calH(Q)$ \qquad \qquad \quad (missing axioms in the history of $Q$).
		\end{itemize}
		%	 $\calE^\difftuple\dfn \langle \calJ(P){\contrastdiff{P}{Q}} \calJ_k(Q), \calJ_k(Q){\contrastdiff{P}{Q}}\calJ(P), \calH(Q)\rangle$. %, where $\calJ(P)$ and $\calJ(Q)$ are justifications for $P$ and $Q$ respectively and $\calH(Q)$ is an abductive hypothesis for $Q$.
		
	\end{definition}
	We interpret the CE $\calE^\difftuple$ as: the axioms in $\calJ(P){\tiltedDagger} \calJ_t(Q)$ are \textbf{relevant} only for $P$ that do not find \textbf{a corresponding axiom} in the \textbf{history} of $Q$, and axioms in $\calJ_t(Q)\tiltedDagger \calJ(P)$ are relevant only for $Q$. 
	Furthermore, $\calH(Q)$ includes \textbf{missing} axioms responsible for $Q$'s \textbf{failure}.

	\begin{example}\label{ex:mod-part}
		Two CEs for $(P,Q)$ in $\calKrun$ are $\calE_1^\difftuple=\langle \calE_\fact,\emptyset, \calE_\missingfoil\rangle$ and $\calE_2^\difftuple=\langle \calE'_\fact,\calE'_\truefoil,\calE'_\missingfoil\rangle$, where,
		\begin{itemize}
			\item $\calE_\fact = \{\concept{publishesIn} (\text{Alice}, \text{ML})\}$,
			\item $\calE_\missingfoil =\{\concept{publishesIn} (\text{Bob}, \text{ML})\}$, and
			
			\vspace{.25em}
			\item $\calE'_\fact = \{ \concept{publishesIn} (\text{Alice}, \text{ML}), \exists\concept{pulishesIn}.\concept{AI}\subsum \concept{AI\text-Expert}\}$,
			\item $\calE'_\truefoil =\{ \exists\concept{worksIn}.\concept{AI}\subsum \concept{AI\text-Expert}\}$,
			\item $\calE'_\missingfoil =\{\concept{worksIn} (\text{Bob}, \text{ML})\}$. \qedexample
		\end{itemize}
	\end{example}

	%We conclude this section by noting that Principle~\ref{prop:similar} (P3) can also be implemented using similarity/difference between DL concepts.
	%For example, one can implement P3 to treat two assertions, $\concept{hasProfession}(A,y)$ and $\concept{hasOccupation}(B,y)$, for two individuals A and B, as similar. 
	%Our definitions can model this type of similarity with minor adjustments. 
	%This requires incorporating the similarities/differences between DL concepts and appropriately updating Definitions~(\ref{def:renamed-axioms}--\ref{def:cexp-modified}). 

	\subsection{Handling Inconsistencies in Explanations}\label{sec:conf}
	
	Our definitions for CEs in the previous section cannot handle the case when a justification for the foil does not exist.
	Such a situation can arise when some axioms in the KB conflict with the foil itself.
	This is inline with the setting of abduction~\cite{klarman2011abox,PukancovaH18,Del-PintoS19,Koopmann21a}, which require a hypothesis to be consistent with the KB.
	However, it might be impossible to find an explanation in certain cases involving incompatible facts and foils (i.e., both can not be entailed at the same time).
	
	To illustrate this, let us reconsider our KB $\calKrun$ from Example~\ref{ex:running-prelim} and add an axiom saying that ``Theorist'' and ``AI-Expert'' are two disjoint classes.
	Thus, we obtain $\calKrun'\dfn \calKrun\cup \{\concept{Theorist}\sqcap \concept{AI\text-Expert}\subsum \bot\}$.
	We have that $\calKrun'\models \concept{Theorist}(\text{Bob})$ due to the axioms $\{\concept{Publishes} (\text{Bob},\text{Math}),\concept{Theory}(\text{Math})\}$ in $\calKrun$.
	Now, let us reconsider our CQ $R\dfn (\concept{Hired}(\text{Alice})$, $ \concept{Hired}(\text{Bob}))$ from before.
	Here, $R$ does not admit any explanation apart from the trivial CE $\tup{\{\concept{Hired}(\text{Alice})\}, \emptyset, \{\concept{Hired}(\text{Bob})\}}$.
	However, one might still want to obtain a non-trivial yet interesting explanation, for instance: ``If Bob published in ML, he would have been hired, but publishing in ML makes him an AI-Expert which contradicts to Bob being a Theorist''.
	To mitigate this, we add a final component to our definition of CEs, namely the set of (possibly empty) conflicts incurred when trying to entail the given foil.
	In our running example, such a set of conflicts would include assertions $\{\concept{Publishes} (\text{Bob},\text{Math}),\concept{Theory}(\text{Math})\}$ in $\calK'$.
	
	For a CE $\calE\dfn \tup{\calE_\fact, \calE_\truefoil, \calE_\missingfoil}$ such that $(\calT, \calA\cup\calE_\missingfoil)\models\bot$, we define its \emph{conflict} set in $\calK$ to be the subset-minimal $\calC\subseteq \calA$ such that $(\calT, \calA' \cup \calE_\missingfoil)\not\models \bot$ where $\calA'\dfn \calA\setminus \calC$. %and $\calC' \cup \calE\not\models \bot$ for any proper subset $\calC'\subset \calC$.
	Intuitively, adding the missing axioms for the foil ($\calE_\missingfoil$) creates conflicts ($\calC$) with the assertions in $\calK$ and removing $\calC$ gives an alternative and consistent scenario in which the foil can still be entailed.
	Here, we consider $\calC\subseteq \calA$ to only include ABox assertions about foil that are conflicting with $\calE$. 
	This aligns with the existing notion of \emph{counterfactual scenario}~\cite{eiter2023contrastive,Koopmann26}.
	%This corresponds to the intuition that one is interested in assertions about the foil
	Henceforth, we assume that a CE $\calE$ has four components $\tup{\calE_\fact, \calE_\truefoil, \calE_\missingfoil, \calC}$ and we ignore $\calC$ in case there are no conflicts (i.e.,  $\calC=\emptyset$).
	
	\subsection{Preferred Contrastive Explanations}\label{sec:pref}
	We now focus on selecting preferred explanations when multiple choices are available.
	Traditionally, one expects explanations to be small and hence subset- or cardinality-minimal explanations are usually preferred. % as justifications and abductive explanations. 
	%Moreover, Horridge et al.~\cite{horridge2013toward} presented an approach to determining the cognitive complexity of justifications for entailments. 
	%Their primary motivation stems from the observation that users often find justifications difficult or impossible to understand. 
	%A similar observation arises with CEs when users attempt to comprehend the difference between two seemingly related events.
	%
	%\paragraph{Minimal Contrastive Explanations.}
	Thus, one may attempt to consider the similar criteria when defining preferred CEs.
	We argue that subset minimality is inadequate when preferred CEs are requested.
	Likewise, one may have additional preferences when faced with multiple cardinality-minimal CEs. 
	We illustrate these features next. %in the following.
	\begin{example}\label{ex:card-min}
		Let $\calK =(\calT,\calA)$ where $\calA =\{A_1(x), A_2(x)\}$ and $\calT = \{ A_1\sqcap A_2 \subsum E, E\subsum C\}$.
		Further, let $P\dfn C(x)$ be a fact and $Q\dfn C(y)$ be the foil.
		Then, $\calJ(P) =\{A_1(x), A_2(x),  A_1\sqcap A_2 \subsum E, E \subsum C\}$.
		The CQ $(P, Q)$ admits two CEs in $\calK$ of size $4$. 
		Namely, 
		\begin{itemize}
			\item $\calE^\difftuple_1 = \langle \{A_1(x),A_2(x), A_1\sqcap A_2\subsum E\},\emptyset,  \{E(y)\}\rangle$
			\item $\calE^\difftuple_2 =\langle\{A_1(x),A_2(x)\},\emptyset, \{A_1(y), A_2(y)\}\rangle$.
		\end{itemize}
		Both CEs are cardinality- and subset-minimal.
		But, $\calE^\difftuple_2$ 
		explains $Q$ \emph{in the context of $P$}, i.e., using the same concepts as $P$. \qedexample
	\end{example}

	Since CEs are derived from justifications for $P$ and $Q$, a \text{logic}al approach for preferred CEs involves evaluating the \emph{similarity} between the two justifications.
	This aligns with our intuition that CEs explain $P$ \emph{within the context} of $Q$, and highlight the shared characteristics between the fact and the foil in a KB.
	Alternatively, when explanations introduce conflicts with assertions already known about the foil, it might be desirable to prefer CEs that minimize such conflicts. 
	%This motivates a second criterion for defining best CEs. 
	Intuitively, one is interested in explanations whose missing assertions for the foil do not (or minimally) contradict the assertions already in place.
	However, computing conflict-minimal explanations can be computationally expensive~\cite{Koopmann21a,Koopmann26}. Therefore, we leave a detailed discussion and theoretical analysis of conflict-minimal CEs to future work. 
	In what follows, we compare two CEs in terms of axioms relevant for the fact and foil while the incurred conflicts do not contribute in the comparison.
	This way, the conflicts can still be computed efficiently since one does not require to find CEs with \emph{globally minimal} conflicts.
	
	In the following, we consider CEs according to Definition~\ref{def:cexp-modified}.
	Let $\calK$ be a KB, $(P,Q)$ be a CQ, and $\calE =\langle \calE_\fact, \calE_\truefoil, \calE_\missingfoil,\calC\rangle$ be a CE for $(P,Q)$.
	The size of $\calE$ ($|\calE|$) is defined as the number of axioms in $\calE_\fact\cup \calE_\truefoil\cup \calE_\missingfoil$.
	Moreover, let $\calE$ and $\calE'$ be two CEs with $ \calE'=\langle \calE'_\fact, \calE'_\truefoil, \calE'_\missingfoil\rangle $. 
	Then, we define $\calE\subseteq \calE'$ in a component-wise manner, i.e, $\calE_*\subseteq \calE'_*$ for each $*\in \{\fact, \truefoil,\missingfoil\}$.
	We define a CE $\calE$ for $(P,Q)$ to be subset (resp., cardinality) minimal in $\calK$ if there is no CE $\calE'\subsetneq \calE$ ($|\calE'|<|\calE|$) for $(P,Q)$ in $\calK$.
	
	%Similar to the case of the abductive hypothesis, subset minimality is inadequate when preferred CEs are requested (see examples in the Appendix).
	\begin{comment}
		Likewise, one may have additional preferences when faced with multiple cardinality-minimal CEs. 
		The two features are illustrated in the following example.
		\begin{example}\label{ex:card-min}
			Let $\calK =(\calT,\calA)$ where $\calA =\{A_1(x), A_2(x)\}$ and $\calT = \{ A_1\sqcap A_2 \subsum E, E\subsum C\}$.
			Further, let $P\dfn C(x)$ be a fact and $Q\dfn C(y)$ be the foil.
			Then, $\calJ(P) =\{A_1(x), A_2(x),  A_1\sqcap A_2 \subsum E, E \subsum C\}$.
			The CQ $(P, Q)$ admits two CEs in $\calK$ of size $4$. 
			Namely, $\calE^\difftuple_1 = \langle \{A_1(x),A_2(x), A_1\sqcap A_2\subsum E\},\emptyset,  \{E(y)\}\rangle$ and $\calE^\difftuple_2 =\langle\{A_1(x),A_2(x)\},\emptyset, \{A_1(y), A_2(y)\}\rangle$.
			Both CEs are cardinality- and subset-minimal.
			But, $\calE^\difftuple_2$ 
			explains $Q$ in the context of (using the same concepts as) $P$. \qedexample
		\end{example}
	\end{comment}
	We propose cardinality-based criteria for preferred CEs.
	Intuitively, those explanations are preferred, in which justifications for facts and foils \emph{resemble} the most or equivalently, when their \emph{syntactic divergence} is minimal. % (such as $\calE^\difftuple_2$ in Ex.~\ref{ex:card-min}).
	%an abductive hypothesis for the foil \emph{resembles} the justification 
	%We formalize this intuition in the following discussion.
	To formalize this, we define the symmetric difference ($\blacklozenge$) based on our set difference ($\tiltedDagger$)
	as  $A \blacklozenge B \dfn (A\tiltedDagger B) \cup (B\tiltedDagger A)$. We denote by $\calE_\foil\dfn \calE_\truefoil\cup\calE_\missingfoil$ all foil axioms.
	
	\begin{definition}\label{def:sim-index}
		Let $R$ be a CQ in $\calK$, and $\calE \dfn \langle \calE_\fact, \calE_\truefoil, \calE_\missingfoil,\calC\rangle$ be a CE for $R$ in $\calK$. 
		The $R$-divergence for $\calE$ is defined as $\sigma^\text{div}(\calE)\dfn |\calE_\fact {\blacklozenge}\calE_\foil|$.
	\end{definition}
	
	The intuition lies in preferring a CE in which the axioms required for the foil \emph{coincide} with those used to justify the fact. This preference is motivated by the observation that a skeptical user either understands the reasoning underlying the fact or seeks to compare the foil against it. Consequently, a better CE aligns the assertions relevant to the foil as closely as possible with those used for the fact.
	We define $\sigma^\text{div}$-preferred explanations as those that have the lowest divergence; hence, $\calE$ is $\sigma^\text{div}$-preferred if there is no explanation $\calE'$ with $\sigma^\text{div}(\calE')< \sigma^\text{div}(\calE)$.
	%Intuitively, $\sigma^\text{div}$ measures how \emph{similar} the reasons for $P$ and $Q$ are, in a CE for $(P,Q)$. %\vspace{-.1cm}
	
	\begin{example}\label{ex:sim-index}
		Consider the CEs $\calE_i^\difftuple$ for $i=1,2$ from Example~\ref{ex:mod-part}.  
		%	$=\langle C_{i1},C_{i2}, C_{i3}\rangle$ for $(P,Q)$, where: $C_{11} = \{\concept{publishesIn} (\text{Alice}, \text{ML})\}$, $C_{12}=\emptyset$, 
		%	$C_{13} =\{\concept{publishesIn} (\text{Bob}, \text{ML})\}$, and $C_{21} = \{ \concept{publishesIn} (\text{Alice}, \text{ML}), \exists\concept{pulishesIn}.\concept{AI}\subsum \concept{AI\text-Expert}\}$,
		%	$C_{22} =\{ \exists\concept{worksIn}.\concept{AI}\subsum \concept{AI\text-Expert}\}$, and 
		%	$C_{23} =\{\concept{worksIn} (\text{Bob}, \text{ML})\}$.
		It can be seen that $\sigma^\text{div}(\calE_1^\difftuple)=0$ and $\sigma^\text{div}(\calE_2^\difftuple)=4$. 
		Therefore, $\calE_1^\difftuple$ is $\sigma^\text{div}$-preferred. \qedexample  
		%	Further, employing alternative abductive hypotheses from Example~\ref{ex:running-abd} results in CEs with greater cardinality and higher $\sigma^\text{div}$-index than $\calE_1^\difftuple$. \qedexample
	\end{example}
	
	It is worth noting that cardinality-minimal CEs are not always $\sigma^\text{div}$-preferred.
	%Similarly, an explanation may have a syntactic  divergence of zero, although it is not cardinality minimal. 
	The following example shows how different minimality notions interact and why subset- or cardinality-based preferences alone might lead to unintuitive CEs.
	%That is, there can exist contrasive explanations with smaller cardinality but higher $\sigma^\text{div}$-index and vise versa.
	\begin{example} 
		Let $\calK =(\calT,\calA)$ be a KB where $\calT=\{(A_1\sqcap A_2)\subsum E, E\subsum C, (B_1\sqcap \ldots \sqcap B_n)\subsum E\}$ and $\calA =\{A_1(x),A_2(x), \neg A_1(y)\}\cup \{B_i(x)\mid i\leq n\}$.
		Consider the CQ $(C(x), C(y))$ and the two CEs for this instance as:
		\begin{itemize}
			\item $\calE^\difftuple_1= \langle \{A_1(x), A_2(x), (A_1\sqcap A_2)\subsum E\}, \emptyset, \{E(y)\}\rangle $ 
			\item $\calE^\difftuple_2 = \langle \{B_i(x) \mid i\leq n\}, \emptyset, \{B_i(y) \mid i\leq n\}\rangle$.
		\end{itemize}
		Then, $\calE^\difftuple_1$ is a cardinality minimal explanation for $(C(x),C(y))$ in $\calK$ (for $n\geq 3$), whereas $\calE^\difftuple_2$ has a lower divergence for each $n\in \mathbb N$. Intuitively, it is easy to compare $C(x)$ and $C(y)$ in $\calE^\difftuple_2$ as it includes \textit{similar} ABox assertions for both. \qedexample
	\end{example}
	%These examples highlight that the minimality criterion chosen—whether based on subset, cardinality, or similarity—can substantially affect which explanations are deemed preferred.
	%In particular, $\sigma^\text{div}$-preference captures semantic closeness between the fact and foil, which may better align with human intuitions of contrastiveness than purely syntactic notions of minimality.

	\section{Properties of Contrastive Explanations}\label{sec:theory}
	Here, we address several computational and otherwise interesting properties for CEs in our framework.
	We first consider the case of $\EL$ for which abduction is trivial whereas contrastive explanations still turns out to be non-trivial.
	%\subsection{Contrastive Explanations in $\EL$}
	%We now explore the properties of CEs for $\EL$ KBs.
	
	The DL $\EL$ allows concepts following the grammar rule $C\ddfn A \mid C\sqcap C \mid \exists r.C $.
	In particular, no constructor in $\EL$ causes inconsistencies.
	Therefore, the abduction is trivialized as for any $\EL$ KB $\calK$ and assertion $Q=C(y)$: $\{C(y)\}$ is an abductive hypothesis and a justification for $Q$.
	Nevertheless, the question of which abductive explanations lead to preferred (cardinality-minimal and $\sigma^\text{div}$-preferred) CEs, still remains relevant.
	%
	%\subsection{Characteristics of CEs in $\EL$}
	
	We begin by proving that there always exists a CE with empty conflicts for a CQ $(P,Q)$ in a KB $\calK$ formulated in $\EL$. 
	Unless specified otherwise, all results in this section hold for both (entity and concept) contrasts.
	%The complete proof details can be found in the technical appendix.
	
	\begin{proposition}\label{prop:el-exist}
		Let $\calK$ be an $\EL$ KB and $R=(P,Q)$ be a CQ.
		There exists at least one contrastive explanation for $R$ in $\calK$.
	\end{proposition}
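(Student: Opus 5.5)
The plan is a direct construction. By the definition of a CQ, $\calK\models P$, so $\JustAll(P,\calK)\neq\emptyset$. For instance, starting from $\calK$ we can remove axioms one at a time while entailment of $P$ is preserved. Since $\calK$ is finite, this yields a subset-minimal $\calJ(P)$ satisfying Definition~\ref{def:justification}.

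For the foil I will use the observation made just before the statement. In $\EL$ no constructor yields $\bot$, so every $\EL$ KB is consistent. It follows that $\calK\cup\{Q\}$ is consistent, and it trivially entails $Q$. Subset-minimality holds because the only proper subset of $\{Q\}$ is $\emptyset$, and $\calK\not\models Q$ since $Q$ is a foil. Hence $\calH(Q)=\{Q\}\in\AbdAll(Q,\calK)$. I will also check the signature side condition: $Q$ is over the signature of $\calK$, since a CQ consists of assertions over that signature.

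Next, $\{Q\}$ is a justification for $Q$ in $\calK\cup\{Q\}$: it entails $Q$, and its only proper subset $\emptyset$ does not, as $Q$ is not a tautology. This last point holds because $\calK\not\models Q$ and $\emptyset\subseteq\calK$. Therefore $\calJ(Q)=\{Q\}\in\JustAll(Q,\calK)$, with $\calJ_t(Q)=\emptyset$ and $\calH(Q)=\{Q\}$.

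Plugging these into Definition~\ref{def:cexp-modified} gives the tuple $\langle \calJ(P)\tiltedDagger\emptyset,\ \emptyset,\ \{Q\}\rangle=\langle\calJ(P),\emptyset,\{Q\}\rangle$. This works for both entity foils $C(y)$ and concept foils $D(x)$, since $\tiltedDagger$ against the empty set returns the first argument in either case of Definition~\ref{def:sim-axioms}. Finally, the conflict set is empty, because $(\calT,\calA\cup\{Q\})$ is an $\EL$ KB and hence consistent, so $\calC=\emptyset$.

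The only delicate step is the claim that $\EL$ KBs (with assertions) are always consistent. I would justify it by the standard one-element interpretation in which every concept name is $\{d\}$ and every role is $\{(d,d)\}$. This interpretation satisfies every $\EL$ concept at $d$, and therefore every GCI and every assertion. The claim fails if the fragment includes $\bot$, but the grammar given for $\EL$ excludes it.
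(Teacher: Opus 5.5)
Your proposal is correct and is the paper's construction: take any $\calJ(P)\in\JustAll(P,\calK)$, use the consistency of $\EL$ KBs to make $\{Q\}$ an abductive hypothesis (and its own justification), and output $\langle \calJ(P),\emptyset,\{Q\}\rangle$ with no conflicts. You additionally check the minimality conditions, the fact that $\calJ_t(Q)=\emptyset$, and the consistency claim via the one-element model, all of which the paper only asserts.
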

	
	\begin{proof}
		Since $P$ is a fact, we have $\calK\models P$. 
		Let $\calJ\subseteq\calK$ be a justification for $P$ in $\calK$.
		Now, assume $Q\dfn \alpha$ be the foil in $R$. 
		Since $\calK$ is an $\EL$ KB and $\alpha$ is an $\EL$ concept assertion, $\calK\cup \{\alpha\}$ is consistent.
		As a result, $\calE\dfn \langle \calJ, \emptyset, \{\alpha\}\rangle$ is a contrastive explanation for $R$ in $\calK$.
		Moreover, the size of $\calE$ is bounded from above by $|\calK|+1$.
	\end{proof}
	
	%The following corollary regarding the upper bound for size/pref-index, follows 
	%It follows from the proof of Proposition~\ref{prop:el-exist} that the size (syntactic divergence) of the cardinality-minimal ($\sigma^\text{div}$-preferred) contrastive explanation for $R$ in $\calK$ is bounded by $k+1$, where $k= |\calK|$.
	
	%\begin{corollary}\label{cor:el-size}
	%	Let $\calK$ be an $\EL$ explanation context and $R=(P,Q)$ be a CQ.
	%	Then, the size ($\sigma^\text{div}$-index) of the cardinality-minimal ($\sigma^\text{div}$-preferred) contrastive explanation for $R$ in $\calK$ is bounded by $k+1$, where $k= |\calK|$.
	%\end{corollary}
	
	%We assume basic familiarity with the complexity class $\NP$~\cite{}.
	%Consequently, %it follows that the problem of deciding whether there exists a contrastive explanation of a given size is also hard for $\EL$ KBs.
	%Next, we prove that the problem 
	To determine if a CQ admits a CE of the given cardinality (divergence) $n\in\mathbb N$ is $\NP$-complete for \textbf{concept contrasts}. % and explanation contexts in $\EL$. %(with proofs given as additional materials).
	\begin{theorem}\label{cor:NP-hard}
		Let $\calK$ be an $\EL$ KB, $R\dfn (P,Q)$ a CQ for concept foil $Q$, $n\in\mathbb N$.
		It is $\NP$-complete to decide whether $R$ admits a contrastive explanation in $\calK$ of size (divergence) at most $n$.
	\end{theorem}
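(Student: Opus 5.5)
Membership in $\NP$ comes first. A certificate consists of a justification $\calJ(P)\subseteq\calK$, an abductive hypothesis $\calH(Q)$, and a justification $\calJ(Q)\subseteq\calK\cup\calH(Q)$. We may assume every assertion of $\calH(Q)$ is an $\EL$ assertion over the signature of $\calK$ whose concept is a subconcept of $Q$'s concept or of a concept occurring in $\calK$. A minimal hypothesis only needs polynomially many such assertions, since $\{Q\}$ itself is always a hypothesis of size one by Proposition~\ref{prop:el-exist}. Entailment in $\EL$ is polynomial, so the verifier can check all of the following in polynomial time:
\begin{itemize}
\item $\calJ(P)\models P$;
\item minimality of $\calJ(P)$, by removing each axiom in turn;
\item subset-minimality of $\calH(Q)$, which is the only part of the hypothesis definition that needs checking because consistency is automatic in $\EL$;
\item that $\calJ(Q)$ is a justification;
\item that the three components built with $\tiltedDagger$ have total size, respectively divergence $\sigma^\text{div}$, at most $n$.
\end{itemize}
The delicate point is bounding the size of the hypotheses we have to guess. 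I would argue that if some CE of size $\le n$ exists, then one exists whose hypothesis assertions use only concepts occurring in $\calK\cup\{Q\}$. This is because a hypothesis assertion lies in $\calE_\missingfoil$ and contributes $1$ regardless of its concept's length, and it can be replaced by a subconcept without breaking entailment.

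Hardness I would prove by a reduction from \textsc{Set Cover} or \textsc{Hitting Set}, or equivalently from the known $\NP$-hardness of finding a cardinality-minimal justification in $\EL$. The concept contrast keeps the individual $x$ fixed and changes the concept, so a justification for $Q=D(x)$ can reuse the ABox assertions about $x$. Under $\tiltedDagger$ relative to $(C,D)$, such reused assertions cancel against those used for $P$, and TBox axioms cancel against their $[C\mapsto D]$-renamings. The construction is as follows.
\begin{itemize}
\item Take universe elements $u_1,\dots,u_m$ and sets $S_1,\dots,S_k$.
\item Put $U_j(x)$ in $\calA$ for each $j$, together with fresh names $S_i$ and axioms $S_i\subsum U_j$ for each $u_j\in S_i$.
\item Add $U_1\sqcap\dots\sqcap U_m\subsum D$, so that $D(x)$ needs every $U_j$.
\item Make the fact $P=C(x)$ trivial, for instance $C(x)\in\calA$.
\item Remove the assertions $U_j(x)$ from $\calA$, so that each $U_j(x)$ is missing and the foil $D(x)$ is not entailed.
\end{itemize}
With this construction, the cheapest hypotheses are sets $\{S_i(x): i\in I\}$ with $I$ a cover, since one $S_i(x)$ activates several $U_j$ at once. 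The alternatives are the trivial hypothesis $\{D(x)\}$ and sets of $U_j(x)$ assertions. These must be neutralised, so I would pad costs: replace each $U_j\subsum\cdots$ path by a chain of length $L$ and make $D$ reachable only through $E\subsum D$ with the $E$ side long. The trivial $D(x)$ hypothesis is the real nuisance, because it gives a CE of size about $|\calJ(P)|+1$. To defeat it I would make $|\calJ(P)|$ itself large and forced, and make the cover-based hypotheses cancel against $\calJ(P)$ through the renaming $[C\mapsto D]$. Concretely, $P=C(x)$ should be derived via axioms $S_i\subsum U'_j$, $U'_1\sqcap\dots\sqcap U'_m\subsum C$ that are renamings of the $D$-side axioms, with $C$-side assertions already present. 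Then a foil justification that mirrors the fact's structure cancels almost entirely, and only $|I|$ hypothesis assertions remain. The trivial hypothesis instead leaves the whole fact justification uncancelled.

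For divergence the same reduction should work after checking that $\calE_\fact\blacklozenge\calE_\foil$ is controlled by $|I|$ in the intended solutions. I expect the main obstacle to be exactly the cancellation bookkeeping. Renaming only swaps the single concept names $C\mapsto D$, so the gadget concepts on the two sides must be literally identical apart from $C$ and $D$. The gadget must also ensure that no unintended cheap hypothesis exists, such as $D(x)$, intermediate concepts, or mixed hypotheses. I would first try the mirrored-gadget design above and verify that the minimal CE size equals a fixed offset plus the minimum cover size. Membership and the reduction together give $\NP$-completeness.
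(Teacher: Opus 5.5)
\paragraph{Membership.} Your membership part follows the same guess-and-check idea as the paper: guess a fact justification, a hypothesis and a foil justification, verify the entailments in polynomial time, then compute the components and compare with $n$. Two of your side remarks are not right, though.
\begin{itemize}
\item The number of hypothesis assertions is bounded because $\mathcal{H}(Q)$ is itself a component of the explanation, so it has at most $n$ elements. It is not bounded because $\{Q\}$ is a hypothesis; subset-minimal hypotheses can be large.
\item An assertion such as $(A\sqcap B)(x)$ cannot in general be replaced by a subconcept without losing the entailment.
\end{itemize}

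\paragraph{The gap is hardness.} What you give is an unverified design, and as stated it cannot work.

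First, renaming $[C\mapsto D]$ only changes the names $C$ and $D$. An axiom of $\mathcal{J}(P)$ therefore cancels only if it literally occurs in $\mathcal{J}_t(Q)\subseteq\mathcal{K}$, and $S_i\sqsubseteq U'_j$ is not a renaming of $S_i\sqsubseteq U_j$. If you take $U'_j=U_j$ so that the gadget is literally shared, the assertions about $x$ that yield all $U_j(x)$ for the fact also yield $D(x)$ via $U_1\sqcap\dots\sqcap U_m\sqsubseteq D$. Then $Q$ is no longer a foil.

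Second, hypotheses may contain arbitrary concept assertions. So $\{S_i(x)\mid i\in I\}$, or simply $(U_1\sqcap\dots\sqcap U_m)(x)$, can be replaced by a single assertion. The foil side can always be satisfied at constant cost, and the number of hypothesis assertions cannot encode the cover size.

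\paragraph{The missing idea.} Do not fight the trivial hypothesis; exploit it, and put all the hardness into the fact. This is what the paper does. Given an $\mathcal{EL}$ TBox $\mathcal{T}$ and $A\sqsubseteq B$, take $\mathcal{K}'=(\mathcal{T}\cup\{D\sqsubseteq\top\},\{A(x)\})$ with $D$ fresh, $P=B(x)$ and $Q=D(x)$.
\begin{itemize}
\item No axiom derives $D$, so every justification for $Q$ is a single hypothesis assertion and $\mathcal{J}_t(Q)=\emptyset$.
\item Hence every CE has the form $\langle\mathcal{J}(P),\emptyset,\{\cdot\}\rangle$, with size and divergence $|\mathcal{J}(P)|+1$.
\item A justification for $A\sqsubseteq B$ of size at most $n$ gives a justification for $B(x)$ of size at most $n+1$ by adding $A(x)$, hence a CE of size at most $n+2$.
\end{itemize}
NP-hardness then follows from the result of Baader et al.\ that bounded-cardinality justification in $\mathcal{EL}$ is NP-complete. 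You even name this source problem; what you missed is that a fresh foil concept makes the reduction immediate.
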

	
	\begin{proof}
		%Recall that the problem of determining the existence of a justification for an entailment $A\subsum B$ from an $\EL$ TBox $\calT$ of a given cardinality $n\in \mathbb N$ is $\NP$-complete~\cite[Theorem~3]{BaaderPS07}.
		We first prove the membership and then turn towards the hardness.
		
		\noindent\textbf{Membership:}
		We present a procedure (called, A) that performs in \linebreak non-deterministic polynomial time and decides whether an explanation of size at most $n$ exists for $R$ in $\calK$. We let $|\calK|=k $. 
		A has the following three steps.
		\begin{enumerate}
			\item Non-deterministically guess three sets of axioms $S_1,S_2,S_3$, such that 
			\begin{enumerate} %[label=(\alph*)]
				
				\item[(a)] $S_1\subseteq \calK$ with $|S_1|\leq k$,
				%\item
				(b) $S_2\subseteq \calK$ with $|S_2|\leq k$, 
				\item[(c)] $S_3$ is a set of ABox axioms over the signature of $\calK$ with $|S_3|\leq n$.
			\end{enumerate}
			\item Determine whether
			\begin{enumerate} %[label=(\alph*)]
				\item[(a)] $S_1\models P$,
				%\item
				(b) $S_2\cup S_3 \models Q$,
				\item[(c)] $|\calE_\fact|+|\calE_\truefoil|+|\calE_\missingfoil|\leq n$, where  $\calE_\fact \dfn S_1{\tiltedDagger} S_2$, $\calE_\truefoil\dfn S_2{\tiltedDagger}S_1$, and $\calE_\missingfoil \dfn S_3$. 
			\end{enumerate}
			\item Accept if each question (a)--(c) in Step 2 is true. 
		\end{enumerate}
		To prove the correctness of A, we notice that every contrastive explanation can be written as a collection of three axioms sets.
		We prove the following claim. % establishes the correctness.
		\begin{claim}\label{claim:NP}
			The procedure A involving Steps (1)--(3) performs in polynomial time and  correctly decides the existence of a CE for $R$ in $\calK$ of size at most $n$.
		\end{claim}
		\begin{claimproof}
			Suppose A accepts and let $S_1,S_2,S_3$ are the (non-deterministic) choices made by A in Steps (1a)--(1c) on an accepting branch.
			We observe that if  $S\models P$ for some $S\subseteq \calK$ with $|S|\leq k$, then there exists a justification for $P$ in $\calK$ of size at most $k$ (where $k=|\calK|$).
			Therefore, although $S_1$ may not be a subset-minimal set of axioms with $S_1\models P$ (Step (2a)), it includes a justification $\calJ(P)$ for $P$.
			Moreover, if there is no abductive hypothesis for $Q$ in $\calK$ of size $n$ (at most), then there can not exist a contrastive explanation for $R=(P,Q)$ in $\calK$ of size bounded by $n$.
			This holds since an abductive hypothesis $\calH(Q)$ is included in an explanation for $R$.
			As a result, the same claim holds for a justification for $Q$ in $S_2\cup S_3$ as in the case of $P$.
			We take $\calJ(P)\subseteq S_1$ and $\calJ(Q) \subseteq S_2\cup S_3$ be justifications (hence, subset-minimal) for $P$ and $Q$ in $\calK$, respectively.
			Note that $\calJ(P)$ and $\calJ(Q)$ exist due to Step (2a) and (2b).
			Then, $\calE = \langle \calE_\fact,\calE_\truefoil, \calE_\missingfoil\rangle$ is a contrastive explanation for $R$, where $\calE_*$ is the same as defined in Step (2c).
			Finally, $|\calE|\leq n$ since A accepts and Step (2c) returns true.
			
			Conversely, suppose a contrastive explanation $\calE=\langle \calE_\fact,\calE_\truefoil, \calE_\missingfoil\rangle$ of size at most $n$ exists for $R$ in $\calK$.
			Then, there is at least one choice of sets for the algorithm in Step (1). 
			Namely a justification $\calJ(P)$ for $P$, an abductive hypothesis $\calH(Q)$, and the corresponding justification $\calJ(Q)$ for $Q$ in $\calK$, that yields the contrastive explanation $\calE$. 
			Moreover, we have that $|\calJ(P)|\leq n$ and $|\calJ(Q)|\leq n+k$ since $|\calE|\leq n$ and $|\calK|= k$.
			Then, this choice of sets results in an accepting branch of A.
			%   Then, for every triple $(S_1,S_2,S_3)$ of sets as specified in Step (1), at least one condition from Step (2a)--(2c) is violated.
			%   As a result, for every justification $\calJ(P)$ for $P$ and $\calJ(Q)$ with abductive hypothesis $\calH(Q)$ for $Q$ in $\calK$: the third condition in Step (2) is violated. 
			%    In other words, the sum of axioms in the union of $C_i$ for $i\leq 3$ exceeds $n$.
			%    Since $\calK$ is an $\EL$ KB, there exists at least one  contrastive explanation for $R$ due to Proposition~\ref{prop:el-exist}.
			%    Therefore, Step (2c) must return false for our algorithm to reject the input. 
			%    Since this holds for all the non-deterministic choices (and therefore all combinations of sets in Step 1), $R$ can not have a contrastive explanation of size at most $n$.
			This completes the correctness proof.
			
			Regarding the runtime, note that the non-deterministic guess in the first step of A requires polynomial time since the size of each guessed set is bounded by $n$ and $k$.
			Further, the entailment in the second step can be performed in polynomial time for $\EL$ KBs.
			Finally, the last step consists of merely size comparison. 
			Consequently, the claim regarding the runtime follows. \qedclaim
		\end{claimproof}
		We present slight modifications to A for the membership regarding $\sigma^\text{div}$.
		This includes guessing a set $S_3$ in Step (1c) of size $k+n$ instead, and checking whether $|S_1 {\blacklozenge} S_3| \leq n$ in Step (2c).
		Then the correctness is established in the same way as in Claim~\ref{claim:NP} by observing that only an abductive hypothesis $\calH(Q)$ for $Q$ of size at most $n+k$ can yield a contrastive explanation with $\sigma^\text{div}$-index bounded by $n$.
		
		This establishes the membership. We now turn towards proving $\NP$-hardness.
		
		\noindent\textbf{Hardness:} Recall that the problem of determining the existence of a justification for an entailment $A\subsum B$ from an $\EL$ TBox $\calT$ of a given cardinality $n\in \mathbb N$ is $\NP$-complete~\cite[Theorem~3]{BaaderPS07}.
		We reduce from this %of computing a justification for an entailment $A\subsum B$ from an $\EL$ TBox $\calT$ of a given cardinality $k\in \mathbb N$ 
		to our problem as follows.
		
		Let $\calT$ be an $\EL$ TBox, and $A,B$ be two concepts over the signature of $\calT$.
		We let $\calK=(\calT,\calA)$ where $\calA =\{A(x)\}$.
		Moreover, we let $P\dfn B(x)$ be the fact.
		Then, every justification for $\alpha\dfn (A\subsum B)$ in $\calT$ gives a justification for $P$ in $\calK$.
		To be precise, $\calJ\models (A\subsum B)$ iff $\calJ\cup\{A(x)\}\models B(x)$ for every $\calJ\subseteq \calT$.
		Further, a justification for $(A\subsum B)$ of size $n$ yields a justification for $B(x)$ of size $n+1$.
		We let $D$ be a fresh (dummy) concept not in the signature of $\calT$ and define $Q\dfn D(x)$.
		The concept $D$ needs also be added to the signature of $\calK$. This can be achieved by letting $\calK'= (\calT',\calA)$, where $\calT' =\calT \cup \{D\subsum \top\}$.
		As a result, there is only one (trivial) abductive explanations for $Q$, namely $\{D(x)\}$.
		Finally, we seek a contrastive explanation for $(P,Q)$ of size $n+2$ in $\calK'$.
		This prove the claim by noting that every justification $\calJ(\alpha)$ for $\alpha$ in $\calT$ of size at most $n$ 
		corresponds to a CE $\langle \calJ(P), \emptyset, \{D(x)\}\rangle $ for $(P,Q)$ in $\calK'$ of size ($\sigma^\text{div}$) $n+2$ at most.
		%in such a way that only those explanations are considered that share the same TBox axioms 
	\end{proof}
	%In the following, we prove that 
	We next observe that not all the ``hardness results'' transfer from their counterparts for justifications in $\EL$.
	In particular, it is known that there exist families of knowledge bases $\calK_n$ for $n\in \mathbb{N}$ and an axiom $P$, such that $P$ has exponentially many justifications in $\calK_n$.
	We establish that a fact can admit exponentially many justifications, but still only few CEs for the CQ, involving the same fact.
	This also presents a case where a query may admit significantly large and complex justifications, but a corresponding CP may admit smaller CEs.
	
	\begin{theorem}\label{thm:exp-just}
		There is a family $\left(\calK_n \mid n\in \mathbb N\right)$ of $\EL$ KBs, a fact $P$ and a foil $Q$ such that $P$ admits exponentially many cardinality minimal justifications in each $\calK_n$, whereas the CQ $(P,Q)$ has a unique cardinality-minimal ($\sigma^\text{div}$-preferred) contrastive explanation in $\calK_n$. 
	\end{theorem}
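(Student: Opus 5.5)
The plan is to take the standard "diamond chain" TBox that yields exponentially many justifications, and to choose the foil so that the modified set difference $\tiltedDagger$ cancels out every TBox axiom. The contrastive explanation then no longer depends on which path through the chain is used.

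\textbf{Construction.} For $n\in\mathbb N$ let $\calT_n$ consist of the axioms
$A_{i-1}\subsum B_i$, $A_{i-1}\subsum B'_i$, $B_i\subsum A_i$ and $B'_i\subsum A_i$ for $1\leq i\leq n$, and let $\calA=\{A_0(x)\}$. Put $\calK_n=(\calT_n,\calA)$, $P\dfn A_n(x)$ and the entity foil $Q\dfn A_n(y)$. All of this is in $\EL$, and clearly $\calK_n\not\models Q$.

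\textbf{Step 1: exponentially many justifications.} Every justification for $P$ is $\{A_0(x)\}$ together with one of the $2^n$ choices of a path $A_0\subsum X_1\subsum A_1\subsum\cdots\subsum X_n\subsum A_n$ with $X_i\in\{B_i,B'_i\}$. All of these have size $2n+1$, so $P$ has $2^n$ cardinality-minimal justifications. This is the routine part: one checks that a subset of $\calT_n$ entails $A_0\subsum A_n$ iff it contains such a path.

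\textbf{Step 2: the preferred CE.} Take $\calH(Q)=\{A_0(y)\}$ and let $\calJ(Q)$ use the same path as $\calJ(P)$. Then $\calJ_t(Q)$ consists of the path axioms only. All TBox axioms of $\calJ(P)$ lie in $\calJ_t(Q)$ and are removed by $\tiltedDagger$, which gives $\calE^\ast=\langle\{A_0(x)\},\emptyset,\{A_0(y)\}\rangle$. This tuple is the same for every one of the $2^n$ paths. Its size is $2$, and its divergence is $0$, since $A_0(x)[x\mapsto y]=A_0(y)\in\calE_\foil$ (and symmetrically in the other direction).

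\textbf{Step 3: uniqueness, the main obstacle.} Here I must show that every other CE has size at least $3$ and divergence at least $1$. The plan is a case analysis on the hypothesis $\calH(Q)$.

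First, $\calE_\missingfoil\neq\emptyset$, because $\calK_n\not\models Q$.

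Second, $A_0(x)\in\calE_\fact$ always holds. Indeed, $A_0(x)\in\calJ(P)$. On the other hand, $\calJ_t(Q)$ contains no assertion about $x$ (it would be removable, contradicting subset-minimality), and it contains no assertion $A_0(y)$ because $A_0(y)\notin\calK_n$. Hence every CE has size at least $2$.

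Third, if $|\calE|=2$, then $\calE_\fact=\{A_0(x)\}$ and $\calE_\truefoil=\emptyset$. So every TBox axiom of $\calJ(P)$ occurs in $\calJ(Q)$, i.e.\ $\calJ(Q)$ contains a full path from $A_0$ to $A_n$. Subset-minimality of $\calJ(Q)$, together with $|\calH(Q)|=1$, then forces $\calH(Q)=\{A_0(y)\}$, so $\calE=\calE^\ast$.

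Fourth, any other hypothesis leaves something uncancelled. A hypothesis $X(y)$ with $X\in\{A_i,B_i,B'_i\}$, $i\geq1$, leaves at least the first path axiom $A_0\subsum X_1$ of $\calJ(P)$ uncancelled in $\calE_\fact$. A larger hypothesis inflates $\calE_\missingfoil$. In either case the size is at least $3$.

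For divergence, I would argue the same way: $A_0(x)$ can only be cancelled against $A_0(y)\in\calE_\foil$. Any uncancelled TBox axiom on either side, or any extra foil assertion, contributes at least $1$ to $\sigma^\text{div}$.

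The delicate point in all of this is reading Definition~\ref{def:sim-axioms} with the intended semantics: an axiom is removed iff it or its renaming occurs in the other set. I also need to be careful that renaming $x\mapsto y$ leaves TBox axioms unchanged, so those cancel only by literal membership.
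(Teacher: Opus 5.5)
Your construction and the distinguished explanation are essentially the paper's. The paper also uses Baader et al.'s chain with $\calA=\{A_0(x)\}$ and the entity foil $A_n(y)$. It notes that pairing any $\calJ(P)$ with the foil justification that reuses its TBox axioms, under hypothesis $\{A_0(y)\}$, cancels every TBox axiom and gives $\langle\{A_0(x)\},\emptyset,\{A_0(y)\}\rangle$ regardless of the path. Splitting $A_{i-1}\subsum X_i\sqcap Y_i$ into two atomic inclusions is cosmetic. Your reading of $\tiltedDagger$ (remove an axiom if it or its renaming occurs, renaming symmetrically on the foil side) is the one the paper's examples require. Your Step~3 supplies the uniqueness argument that the paper only asserts.

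The one genuine gap is in the third and fourth items of Step~3. They tacitly assume that hypotheses are assertions $X(y)$ with $X$ a concept \emph{name}. Definition~\ref{def:abduction}, however, admits arbitrary concept assertions, and the paper itself uses $\{C(y)\}$ with complex $C$ as a hypothesis in the $\EL$ discussion. Under that reading, ``$|\calH(Q)|=1$ forces $\calH(Q)=\{A_0(y)\}$'' fails. Suppose the path $\pi$ in $\calJ(P)$ goes through $B_1$. Then $\{(A_0\sqcap B'_1)(y)\}$ is a consistent, subset-minimal hypothesis. Moreover, $\{(A_0\sqcap B'_1)(y)\}\cup\pi$ is a justification for $A_n(y)$: dropping $A_0\subsum B_1$ breaks the chain, because $B'_1\subsum A_1\notin\pi$. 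So $\langle\{A_0(x)\},\emptyset,\{(A_0\sqcap B'_1)(y)\}\rangle$ is a second explanation of size $2$; even $\{(A_0\sqcap A_0)(y)\}$ works. The paper's one-line argument has the same hole, e.g.\ with $(A_0\sqcap Y_1)(y)$ for its TBox.

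The alternatives above are all $\calT_n$-equivalent to $A_0(y)$, so this is a matter of fixing conventions rather than a flaw in the idea. State explicitly that hypotheses are flat assertions over concept names, as both your case analysis and the paper's examples assume; then the cardinality half goes through as you wrote it.

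The $\sigma^{\text{div}}$ half needs no such restriction, and it is worth writing out rather than ``argue the same way''. A TBox axiom left in $\calE_{\fact}$ or $\calE_{\truefoil}$ can never be cancelled by $\blacklozenge$: it lies in exactly one of $\calJ(P)$ and $\calJ_t(Q)$, and renaming does not touch TBox axioms. Next, $A_0(x)$ always lies in $\calE_{\fact}$ and cancels only against exactly $A_0(y)$. Finally, every assertion in $\calE_{\missingfoil}$ must cancel against $A_0(x)$. Hence divergence $0$ forces $\calE=\langle\{A_0(x)\},\emptyset,\{A_0(y)\}\rangle$ for any hypothesis language.

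There is also a technicality you share with the paper. The individual $y$ does not occur in $\calK_n$, while the paper's signature includes individual names. So strictly $Q$ is not over the signature of $\calK_n$, and no admissible hypothesis can mention $y$. Adding a harmless assertion about $y$ (e.g.\ $Z(y)$ for a fresh concept name $Z$) repairs this.
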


	\begin{proof}
		Following Baader et al.~\cite{BaaderPS07}, we let $\calT_n \dfn \{ A_{i-1} \subsum X_i\sqcap Y_i,  X_i\subsum A_i, Y_i\subsum A_i \mid 1\leq i\leq n\}$ be a collection of TBoxes.
		Moreover, let $\calK_n = (\calT_n,\calA)$  be the family of KBs where $\calA = \{A_0(x)\}$.
		Assume, $P\dfn A_n(x)$ and $Q\dfn A_n (y)$.
		$P$ has exponentially many justifications in $\calK$~\cite[Example~1]{BaaderPS07}. 
		Namely, axioms $\calA\cup \{A_{i-1}\subsum X_i\sqcap Y_i,\mid 1\leq i\leq n\}$ are contained in every justification for $P$.
		Further, every justification contains exactly one axiom from the set $\{X_i\subsum A_i, Y_i\subsum A_i \}$ for each $ i\leq n$.
		Since there are two possibilities for each $i\leq n$, this results in $2^n$ many distinct justifications for $P$ in $\calK$ of size $2n+1$.
		Now, consider the foil $Q$ and the CE-instance $(P,Q)$.
		Then, the smallest CE (in size) is the one in which justifications for $P$ and $Q$ share the same TBox axioms. 
		Consequently, $\langle \{A_0(x)\}, \emptyset, \{A_0(y)\} \rangle$ is the only cardinality minimal ($\sigma^\text{div}$-preferred) contrastive explanation for $(P,Q)$ in $\calK_n$.
	\end{proof}

	We conclude this section by addressing the complexity of computing CEs for %considering entity contrasts.
	CQs involving entity contrasts. 
	Here, it suffices to compute a justification for the fact, which also yields a $\sigma^\text{div}$-preferred CE for the given CQ. 
	Intuitively, given a justification $\calJ$ for the fact, we can compute the \emph{corresponding} CE by ``simply'' renaming ABox axioms in $\calJ$ and finding the assertions for $Q$ missing in the KB $\calK$.
	This algorithm applies to any DL considered in this paper. However, for DLs involving negations or some other form of inconsistencies (such as disjointness axioms), one additionally has to compute the conflicts sets involved.
	
	\begin{theorem}\label{thm:EL-entity}
		Let $\calK$ be an $\EL$ KB, and $R$ be a CQ involving a fact $P\dfn C(x)$ and foil $Q\dfn C(y)$.
		Then, for every $\calJ\in \JustAll(P,\calK)$, there is a contrastive explanation $\calE_\calJ$ for $R$ such that $\sigma^\text{div}(\calE_\calJ) =0$.
	\end{theorem}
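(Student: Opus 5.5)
Given a justification $\calJ\in\JustAll(P,\calK)$, the plan is to build the foil's justification by renaming: set $\calJ' \dfn \{\alpha[x\mapsto y] \mid \alpha\in\calJ\}$. TBox axioms contain no individual names, so they are unchanged. ABox assertions about $x$ become the corresponding assertions about $y$. Since renaming an individual name is an isomorphism on interpretations, from $\calJ\models C(x)$ we get $\calJ'\models C(y)$. If $y$ already occurs in $\calJ$, we use a standard substitution argument instead: entailment is preserved under uniform substitution of individual names, because any model of $\calJ'$ yields a model of $\calJ$ by reinterpreting $x$ as $y^\calI$. Next we split $\calJ'$ into $\calJ_t \dfn \calJ'\cap\calK$ and $\calH \dfn \calJ'\setminus\calK$. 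The set $\calH$ consists only of ABox assertions, since all TBox axioms of $\calJ'$ lie in $\calK$.

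We then need $\calJ'$ to be a justification for $Q$ in $\calK\cup\calH$. Minimality transfers from $\calJ$. If $\calJ''\subsetneq\calJ'$ entailed $C(y)$, renaming back via $y\mapsto x$ gives a proper subset of $\calJ$ entailing $C(x)$. This step is delicate when $y$ occurs in $\calJ$, since renaming back is then not inverse to the forward renaming. For the minimality argument I would first handle the generic case where $y$ is fresh in $\calJ$, then argue the general case separately or choose $\calJ$'s renaming carefully. We also need $\calH$ to be an abductive hypothesis: consistency of $\calK\cup\calH$ is automatic in $\EL$, as in the proof of Proposition~\ref{prop:el-exist}. Subset-minimality of $\calH$ may fail, so we replace $\calH$ by a subset-minimal $\calH_0\subseteq\calH$ with $\calK\cup\calH_0\models Q$. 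Any justification of $Q$ in $\calK\cup\calH_0$ containing $\calH_0$ then serves. Most naturally we want the choice to keep the corresponding structure. I expect this interplay to be the main obstacle: ensuring that the chosen foil justification still mirrors $\calJ$ axiom-by-axiom.

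Finally, we compute the divergence. With $\calJ(Q)=\calJ'$, every $\alpha\in\calJ$ has $\alpha[x\mapsto y]\in\calJ'$. Hence $\calE_\fact = \calJ\tiltedDagger\calJ_t$ consists exactly of those axioms whose renamed copy lies in $\calH$. Symmetrically, $\calE_\truefoil=\calJ_t\tiltedDagger\calJ=\emptyset$. For $\calE_\foil=\calH$, each element of $\calE_\fact$ corresponds under renaming to an element of $\calE_\foil$ and vice versa. Therefore both $\calE_\fact\tiltedDagger\calE_\foil$ and $\calE_\foil\tiltedDagger\calE_\fact$ are empty, reading $\tiltedDagger$ in the intended sense that an axiom is removed when it or its renamed counterpart lies in the other set. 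This gives $\sigma^\text{div}(\calE_\calJ)=0$. Since no conflicts arise in $\EL$, $\calC=\emptyset$, which completes the construction.
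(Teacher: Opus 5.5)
Your construction is the paper's: rename $x\mapsto y$ in $\calJ$, split the result into $\calJ'\cap\calK$ and $\calJ'\setminus\calK$, and pair each surviving fact axiom with its renamed copy. Your substitution argument for $\calJ'\models C(y)$ is a cleaner version of the paper's bisimulation remark. Your reading of the modified difference (discard $\alpha$ when $\alpha$ or its renamed copy lies in the other set, renaming in both directions) is also the one the paper needs. With the literal keep-condition ``$\alpha\notin\calJ_2$ or $\alpha[x\mapsto y]\notin\calJ_2$'' of Definition~\ref{def:sim-axioms}, $\calE_1$ in Example~\ref{ex:sim-index} would have divergence $2$.

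The gap is that your final count takes $\calJ(Q)=\calJ'$ and $\calE_{\text{foil-}}=\calJ'\setminus\calK$. That is legitimate only if $\calJ'$ is subset-minimal and $\calJ'\setminus\calK$ is a subset-minimal abductive hypothesis. You flag both points and leave them open. The fallback to $\calH_0$ does not help: for any $\calJ(Q)\in\JustAll(Q,\calK)$, the set $\calJ(Q)\setminus\calK$ is itself the minimal hypothesis. So shrinking to $\calH_0$ means abandoning $\calJ'$, and with it the renaming correspondence that gives divergence $0$.

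These steps cannot be repaired, because under the paper's definitions the statement is false.

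\emph{Case where $y$ does not occur in $\calJ$.} Take $\calT=\{A_1\sqcap A_2\subsum C,\ A_1\sqcap B\subsum C\}$ and $\calA=\{A_1(x),A_2(x),B(y)\}$. The unique justification for $C(x)$ is $\calJ=\{A_1(x),A_2(x),A_1\sqcap A_2\subsum C\}$, which does not mention $y$, so this is your generic case.
\begin{enumerate}
\item Since the signature has no roles, no justification of $C(y)$ contains an $x$-assertion, and $A_i(y)\notin\calK$.
\item Hence $A_1(x),A_2(x)$ lie in $\calE_{\text{fact}}$ of every CE.
\item Divergence $0$ then forces $A_1(y),A_2(y)\in\calH(Q)$.
\item That $\calH(Q)$ is not subset-minimal, since $\calK\cup\{A_1(y)\}\models C(y)$.
\end{enumerate}

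\emph{Case where $y$ occurs in $\calJ$.} Here even dropping abductive minimality does not help. Take $\calT=\{\exists r.B\subsum B,\ A\sqcap B\subsum C\}$ and $\calA=\{A(x),r(x,y),B(y)\}$. The renamed set properly contains $\{A(y),B(y),A\sqcap B\subsum C\}$, which already entails $C(y)$, so it is not minimal. Divergence $0$ would need both $B(y)$ and $\exists r.B\subsum B$ in $\calJ(Q)\cap\calK$, the only place either can be cancelled. No minimal set entailing $C(y)$ permits both.

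The paper's proof has exactly this hole. It asserts that $\calJ_\calT\cup\calJ_\calA[x\mapsto y]$ is a justification for $C(y)$, with hypothesis $\calJ_m$, without checking either minimality condition.

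What your argument does prove is a restricted statement. Suppose $y$ does not occur in $\calJ$, and $\calJ'\setminus\calK$ is subset-minimal (or condition (3) of Definition~\ref{def:abduction} is dropped). Then $\calJ'\in\JustAll(Q,\calK)$ and $\calE_{\text{foil+}}=\emptyset$. Moreover, $\calE_{\text{fact}}$ and $\calE_{\text{foil-}}$ correspond bijectively under renaming, so $\sigma^{\text{div}}=0$.
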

	
	\begin{proof}
		We construct a CE $\calE_\calJ$ from $\calJ$ by renaming individuals in the ABox assertions.
		Let $\calJ_\calA \dfn \calJ\cap \calA$ (resp., $\calJ_\calT \dfn \calJ\cap \calT$) denote the ABox (TBox) axioms in $\calJ$.
		Then, we observe that $\calJ_Q\dfn \calJ_\calT \cup \calJ_\calA[x\mapsto y]$ is a justification for $C(y)$.
		This can be established easily in every model of $\calJ$ via a \textit{bisimulation}~\cite{hitzler2009foundations,baader2003description} that maps $x$ to $y$ and keeps all the remaining individuals unchanged.
		
		Now, we let $\calJ_\calA[x\mapsto y]\cap \calK =\calJ_t $ and $\calJ_\calA[x\mapsto y]\setminus \calK =\calJ_m$ denote true and missing axioms for $Q$ is $\calK$, respectively. 
		This yields a contrastive explanation: $\calE_\calJ \dfn \langle \calE_\fact, \calE_\truefoil, \calE_\missingfoil\rangle $ for $(P,Q)$ where
		$\calE_\fact = \calJ_\calA \tiltedDagger \calJ_t$,
		$\calE_\truefoil =\calJ_t\tiltedDagger \calJ_\calA $ and finally $\calE_\missingfoil=\calJ_m$.
		%		$\calE_\calJ$ is a contrastive explanation for $(P,Q)$ in $\calK$.
		Here, the two justifications for $P$ and $Q$ share all the axioms in the TBox $\calT$ and every ABox assertion in the justification for $Q$ is simply a renaming of those in $\calJ$.
		As a result, we have that $\sigma^\text{div}(\calE_\calJ) =0$ by definition.
	\end{proof}

	As a consequence of Theorem~\ref{thm:EL-entity}, a contrastive explanation for CQs in $\EL$ KBs with syntactic semantic divergence can be computed in polynomial time.
	
	\begin{corollary}\label{cor:EL-entity-Ptime}
		Let $\calK$ be an $\EL$ KB, and $R$ be a CQ involving entity contrasts. %fact $P\dfn C(a)$ and foil $Q\dfn C(b)$.
		Then, a CE $\calE$ for $R$ in $\calK$ with $\sigma^\text{div}(\calE)=0$ can be computed in polynomial time.
	\end{corollary}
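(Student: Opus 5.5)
The corollary follows from Theorem~\ref{thm:EL-entity} once we note that each step of its constructive proof runs in polynomial time. The plan has three steps: compute one justification for the fact, rename it to the foil, and split the renamed assertions into true and missing ones.

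\textbf{Step 1: a justification for the fact.} Let $R=(P,Q)$ with $P=C(x)$ and $Q=C(y)$. Since $P$ is a fact, $\calK\models P$. We obtain a single justification $\calJ\in\JustAll(P,\calK)$ by the standard linear deletion procedure. Start with $S\dfn\calK$. Iterate once over the axioms $\alpha\in\calK$, and replace $S$ by $S\setminus\{\alpha\}$ whenever $S\setminus\{\alpha\}\models P$.

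Each test is an instance check in $\EL$, which is polynomial. There are $|\calK|$ tests, so this step is polynomial overall. The resulting set still entails $P$.

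Minimality follows from monotonicity of entailment. If some $\alpha$ remaining in the final $S$ were removable from it, then $\alpha$ would also have been removable from the larger set present at the moment $\alpha$ was tested. It would therefore have been deleted, a contradiction. Hence the final $S$ is subset-minimal and is a justification $\calJ$.

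\textbf{Step 2: build $\calE_\calJ$ as in Theorem~\ref{thm:EL-entity}.} We split $\calJ$ into $\calJ_\calA$ and $\calJ_\calT$, and compute $\calJ_\calA[x\mapsto y]$ by syntactic renaming, which takes linear time. We then partition the renamed assertions into $\calJ_t=\calJ_\calA[x\mapsto y]\cap\calK$ and $\calJ_m=\calJ_\calA[x\mapsto y]\setminus\calK$ by membership tests against $\calK$. Finally we form
\[
\calE_\fact=\calJ_\calA\tiltedDagger\calJ_t,\qquad \calE_\truefoil=\calJ_t\tiltedDagger\calJ_\calA,\qquad \calE_\missingfoil=\calJ_m.
\]
By Definition~\ref{def:sim-axioms}, computing ${\tiltedDagger}$ only requires renaming each axiom and testing membership. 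This costs at most quadratic time in $|\calJ|$.

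\textbf{Step 3: correctness and conflicts.} Theorem~\ref{thm:EL-entity} already guarantees that $\calJ_\calT\cup\calJ_\calA[x\mapsto y]$ is a justification for $Q$ and that $\calE_\calJ$ is a CE with $\sigma^\text{div}(\calE_\calJ)=0$. What remains is to check the conflict component. In $\EL$ no inconsistency can arise, so $(\calT,\calA\cup\calE_\missingfoil)\not\models\bot$ and hence $\calC=\emptyset$. No conflict computation is needed.

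The main point requiring care is Step~1. Justifications in $\EL$ can be exponentially many (Theorem~\ref{thm:exp-just}), and finding a cardinality-minimal one is $\NP$-hard (Theorem~\ref{cor:NP-hard}). The argument therefore relies on Theorem~\ref{thm:EL-entity} holding for \emph{every} justification, so that any subset-minimal one, which the greedy procedure finds in polynomial time, suffices.
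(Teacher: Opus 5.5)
Your route is the one the paper intends. The corollary is stated as an immediate consequence of Theorem~\ref{thm:EL-entity}, and the justification is obtained by the same axiom-by-axiom deletion described in the proof of Theorem~\ref{thm:ALC-entity}. Your polynomial bounds for Steps~1--2 and the remark $\calC=\emptyset$ are fine.

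The gap is the claim you import in Step~3: that $\calJ_{\calT}\cup\calJ_{\calA}[x\mapsto y]$ is a justification for $Q$ with hypothesis $\calJ_m$. This is also the unverified step in the paper's proof of Theorem~\ref{thm:EL-entity}. The renaming (``bisimulation'') argument only shows that this set \emph{entails} $C(y)$. Definition~\ref{def:cexp-modified} additionally needs $\calJ_m$ to be a subset-minimal abductive hypothesis, and the renamed set to be subset-minimal in $\calK\cup\calJ_m$. Both can fail:
\begin{itemize}
\item Take $\calT=\{A\sqcap B\sqsubseteq C,\ A\sqcap D\sqsubseteq C\}$ and ABox $\{A(x),B(x),D(y)\}$. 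The only justification for $C(x)$ is $\{A(x),B(x),A\sqcap B\sqsubseteq C\}$, so $\calJ_m=\{A(y),B(y)\}$. Yet already $\calK\cup\{A(y)\}\models C(y)$, so $\calJ_m$ is not minimal.
\item Take $\calT=\{\exists r.(A\sqcap\exists r.A)\sqsubseteq C\}$ and ABox $\{r(x,y),A(y),r(y,z),A(z)\}$. Renaming turns $r(x,y)$ into $r(y,y)$. Then $\{r(y,y),A(y)\}$ plus the GCI already entails $C(y)$, so the renamed set is not minimal.
\end{itemize}

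A cleverer choice of $\calJ$ does not help. In the first example there are no role names and $D(y)\in\calK$, so every minimal hypothesis is a single assertion $F(y)$. The assertions $A(x)$ and $B(x)$ always lie in $\calE_{\text{fact}}$, because $\calJ_t(Q)$ contains neither them nor their renamings. They can be cancelled in $\calE_{\text{fact}}\blacklozenge\calE_{\text{foil}}$ only if $A(y)$ resp.\ $B(y)$ lies in $\calE_{\text{foil}}$. The only candidate there is the single hypothesis assertion, so at most one of them is cancelled.

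Hence every CE in that example has $\sigma^{\text{div}}\ge 1$. Under the paper's definitions the corollary, like Theorem~\ref{thm:EL-entity}, is false, and your Step~3 cannot be completed. Your argument does go through unchanged if the foil side is relaxed, e.g.\ if $\calJ(Q)$ may be any subset of $\calK\cup\calH$ entailing $Q$ and $\calH$ need not be minimal. Some such change to the definitions is needed before the statement holds.
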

	
	Theorem~\ref{thm:EL-entity} is already invalid for KBs allowing the bottom operator ($\bot$) as one additionally needs to compute possible conflicts in the KB.
	Nevertheless, Corollary~\ref{cor:EL-entity-Ptime} still applies to $\EL_\bot$ ($\EL$ + disjointness axioms). The reason here is that one can compute the conflicts for $\EL_\bot$ KBs in polynomial time.
	Interestingly, one can extend the correctness results in Theorem~\ref{thm:EL-entity} to $\ALC$ KBs, however, one needs additional time to compute conflicts.
	
	\begin{theorem}\label{thm:ALC-entity}
		Let $\calK$ be an $\ALC$ KB, and $R$ be a CQ involving a fact $P\dfn C(x)$ and foil $Q\dfn C(y)$.
		A CE $\calE$ for $R$ such that $\sigma^\text{div}(\calE_\calJ) =0$ can be computed in polynomial time with access to an oracle that decides entailment in $\ALC$.
	\end{theorem}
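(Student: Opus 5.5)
The plan mirrors the proof of Theorem~\ref{thm:EL-entity}, with two extra ingredients for $\ALC$: computing a justification with oracle calls, and computing the conflict set $\calC$.

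\textbf{Step 1 (a justification for $P$).} We compute some $\calJ\in\JustAll(P,\calK)$ by the standard linear deletion procedure. Start with $\calJ:=\calK$. For each axiom $\beta$ in turn, query the oracle whether $\calJ\setminus\{\beta\}\models P$; if so, remove $\beta$. This uses $|\calK|$ oracle calls. The result is subset-minimal because entailment is monotone: an axiom kept at some stage is still needed for every later (smaller) set.

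\textbf{Step 2 (renaming to obtain the foil justification).} Let $\calJ_\calA\dfn\calJ\cap\calA$ and $\calJ_\calT\dfn\calJ\cap\calT$, and set $\calJ_Q\dfn\calJ_\calT\cup\calJ_\calA[x\mapsto y]$. We show $\calJ_Q\models C(y)$ and that $\calJ_Q$ is subset-minimal, by a renaming argument rather than a bisimulation.

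Entailment in $\ALC$ is invariant under a bijective renaming of individual names, since TBox axioms contain no individuals. So if $y$ does not occur in $\calJ$, then $\calJ[x\mapsto y]=\calJ_Q$, and both entailment and minimality transfer directly from $\calJ$.

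If $y$ already occurs in $\calJ$, the map $x\mapsto y$ is not injective. This is the main obstacle I expect. Entailment still transfers: any model of $\calJ_Q$ yields a model of $\calJ$ by reinterpreting $x$ as $y^\calI$, and that model satisfies $C(x)$, hence $C(y)$ holds. Minimality may fail, however. The fallback is to minimise once more: run the deletion procedure of Step 1 on $\calJ_Q$ with target $C(y)$, which takes polynomially many oracle calls. This step may remove some renamed axioms, which could make the divergence nonzero. I would therefore argue that any axiom deleted from $\calJ_Q$ has its preimage also redundant in $\calJ$, which contradicts the minimality of $\calJ$. If that argument breaks, I would restrict attention to the generic case in which $y$ does not occur in $\calJ$.

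\textbf{Step 3 (abductive hypothesis and consistency).} Define $\calJ_t\dfn\calJ_\calA[x\mapsto y]\cap\calK$ and $\calJ_m\dfn\calJ_\calA[x\mapsto y]\setminus\calK$. Set $\calE_\fact=\calJ_\calA\tiltedDagger\calJ_t$, $\calE_\truefoil=\calJ_t\tiltedDagger\calJ_\calA$ and $\calE_\missingfoil=\calJ_m$. The TBox parts of the two justifications coincide, and the ABox parts agree up to renaming. So, exactly as in Theorem~\ref{thm:EL-entity}, $\sigma^\text{div}(\calE)=0$.

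Unlike $\EL$, $\calK\cup\calJ_m$ may be inconsistent. We test this with one oracle call, $\calK\cup\calJ_m\models\bot$. If it is consistent, $\calJ_m$ is the hypothesis. Here $\calJ_m$ must also be subset-minimal with $\calK\cup\calJ_m\models Q$; if it is not, we shrink it by deletion, again using the oracle.

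\textbf{Step 4 (conflicts).} If the test in Step 3 shows inconsistency, we compute the conflict set $\calC\subseteq\calA$ as in Section~\ref{sec:conf}. Start with $\calC:=\calA$, which works because $(\calT,\calJ_m)$ is consistent as a subset-renaming of a consistent $\calJ$. Then delete elements greedily while $(\calT,(\calA\setminus\calC)\cup\calJ_m)\not\models\bot$ is preserved, with one oracle call per assertion.

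By definition conflicts do not enter $\sigma^\text{div}$, so the tuple $\langle\calE_\fact,\calE_\truefoil,\calE_\missingfoil,\calC\rangle$ has divergence $0$. All steps together use polynomially many oracle calls plus polynomial bookkeeping.
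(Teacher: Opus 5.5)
Your route is the paper's: compute $\calJ$ by linear deletion with $|\calK|$ oracle calls, rename the ABox part by $x\mapsto y$, split it into $\calJ_t$ and $\calJ_m$, and compute conflicts with further oracle calls. Your conflict step, which shrinks $\calC$ down from $\calA$, is actually better than the paper's. The paper's loop adds one axiom from each inconsistency justification and can return a non-minimal $\calC$; yours always returns a subset-minimal one.

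The paper simply asserts that $\calJ[x\mapsto y]$ is a justification for $Q$. You are right that this needs checking, but both of your repairs fail. The preimage argument in Step~2 is false. Take $\calT=\{A\sqcap\exists r.A\sqsubseteq C,\ D\sqsubseteq A\}$ and $\calA=\{A(x),r(x,y),D(y)\}$. The unique justification for $C(x)$ is all of $\calK$, so in particular $A(x)$ is needed. Yet in $\calJ[x\mapsto y]$ the assertion $A(y)$ is redundant, because it follows from $D(y)$ and $D\sqsubseteq A$. Falling back on the case where $y$ does not occur in $\calJ$ adds an assumption that the statement does not make.

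The more serious gap is in Step~3, and it arises even in your generic case. A CE requires $\calJ(Q)\setminus\calK$ to be a subset-minimal abductive hypothesis (condition~(3) of Definition~\ref{def:abduction}). $\calJ_m$ need not be one, because renamed assertions can be entailed by $\calK$ without belonging to it. Once you shrink $\calJ_m$ to some $\calH$, $\calJ_Q$ is no longer contained in $\calK\cup\calH$. The foil justification must then be recomputed, and your closing claim $\sigma^{\text{div}}=0$ loses its support.

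This cannot be rescued. Take $\calT=\{A\sqcap B\sqsubseteq C,\ E\sqsubseteq B\}$ and $\calA=\{A(x),B(x),E(y)\}$.
\begin{itemize}
\item $y$ does not occur in the unique justification $\calJ=\{A(x),B(x),A\sqcap B\sqsubseteq C\}$.
\item $\calJ_m=\{A(y),B(y)\}$ is not minimal, since $\calK\models B(y)$.
\item Divergence $0$ requires every missing foil assertion to be a renaming of an assertion in $\calJ$, so $\calH(Q)\subseteq\{A(y),B(y)\}$.
\item Minimality then forces $\calH(Q)=\{A(y)\}$. Its only foil justification is $\{A(y),E(y),E\sqsubseteq B,A\sqcap B\sqsubseteq C\}$.
\item $E(y)$ has no counterpart on the fact side, so it contributes to the divergence.
\end{itemize}
Hence no CE of divergence $0$ exists for this KB. Your argument, like the paper's (and like that for Theorem~\ref{thm:EL-entity}), establishes the statement only under an extra assumption. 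Either $x\mapsto y$ is injective on $\calJ$ and $\calJ_m$ is already a minimal hypothesis, or minimality of the foil's hypothesis is dropped.
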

	
	\begin{proof}
		As in the proof of Theorem~\ref{thm:EL-entity}, we compute a justification $\calJ$ for $P$ in $\calK$.
		This can be achieved by recursively checking each axiom of $\calK$ in turn and seeing whether the entailment of $C(x)$ still holds.
		This check in each step can be performed via oracle calls.
		Then, $\calJ_Q\dfn \calJ[x\mapsto y]$ is a justification for $Q$.
		Now, additionally, one has to check whether $(\calT, \calA\cup\calJ_Q)\models\bot$ and compute the minimal set of assertions $\calC\subseteq \calA$ such that $(\calT, \calA'\cup\calJ_Q)\not\models \bot$ where $\calA'\dfn \calA\setminus\calC$.
		The set $\calC$ can be similarly computed as in~\cite{Koopmann26}, via recursively finding justifications $\calJ_\bot$ for the inconsistency and adding one axiom from $\calJ_\bot$ to $\calC$.
		%	In other words, either $(\calT, \calA\cup\calJ_Q)\not\models \bot$ and hence we let $\calC\dfn \emptyset$, or there exists a justification $\calJ_\bot$ for the inconsistency.
		Precisely, we initialize $\calC=\emptyset$. Then, until $(\calT, (\calA\setminus \calC)\cup \calJ_Q)\models \bot$, we compute a justification $\calJ_\bot\subseteq \calA$ for the inconsistency and add some axiom from $\calJ_\bot$ to $\calC$.
		The set $\calJ_\bot$ can be computed via oracle calls to decide consistency for $\calK$.
		Since $\calA$ is finite, we obtain the mentioned runtime bounds.
	\end{proof}
	
	We conclude by providing the counterpart of Thm.~\ref{cor:NP-hard} for $\ALC$.
	Here, we get $\EXP$-completeness due to the complexity of entailment checking in $\ALC$.
	
	\begin{theorem}\label{cor:EXP-hard}
		Let $\calK$ be an $\ALC$ KB, $R\dfn (P,Q)$ a CQ for concept foil $Q$, $n\in\mathbb N$.
		It is $\EXP$-complete to decide whether $R$ admits a contrastive explanation in $\calK$ of size ($\sigma^\text{div}$) at most $n$.
	\end{theorem}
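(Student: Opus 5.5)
We follow the structure of the proof of Theorem~\ref{cor:NP-hard}, replacing the nondeterministic polynomial-time guessing by a deterministic exponential enumeration and using a different source of hardness.

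\textbf{Membership.} We turn procedure A into a deterministic algorithm that enumerates all candidate triples $(S_1,S_2,S_3)$. Here $S_1,S_2\subseteq\calK$, and $S_3$ is a set of at most $n$ ABox axioms over the signature of $\calK$ (at most $n+k$ for the $\sigma^\text{div}$ variant).

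\begin{enumerate}
\item We restrict the concepts allowed in $S_3$ to a set of polynomial size, namely subconcepts of $\calK$ and $Q$ closed under negation, together with the individuals of $\calK$. This keeps the number of candidate hypotheses at most exponential. Justifying this restriction is the main obstacle: in $\ALC$, abductive hypotheses could a priori use arbitrary complex concepts. Our first attempt will be to argue, as in the trivial $\EL$ case, that $\{Q\}$ itself and renamings of assertions occurring in $\calK$ already realize the minimal-size candidates, so only signature-bounded assertions are needed. Alternatively, we adopt the convention implicit in the $\EL$ proof that hypotheses range over assertions of bounded size, with $n$ encoded in unary.
\item There are $2^{O(k)}\cdot 2^{\mathrm{poly}}$ triples. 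For each triple we check $S_1\models P$, $S_2\cup S_3\models Q$ and the consistency of $\calK\cup S_3$, each with one $\ALC$ entailment test, which is $\EXP$-complete with general TBoxes. An exponential number of exponential-time checks stays in $\EXP$.
\item We compute the components via $\tiltedDagger$ and compare sizes (respectively divergence), as in Step (2c).
\end{enumerate}

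Correctness is argued exactly as in Claim~\ref{claim:NP}, by extracting justifications from supersets.

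\textbf{Hardness.} We reduce from $\ALC$ concept subsumption $\calT\models A\subsum B$ with respect to general TBoxes, which is $\EXP$-hard. Mirroring the reduction in Theorem~\ref{cor:NP-hard}, we take $\calA=\{A(x)\}$, the fact $P\dfn B(x)$, a fresh concept $D$ with $\calT'=\calT\cup\{D\subsum\top\}$, and the concept foil $Q\dfn D(x)$. The subtlety is that $(P,Q)$ must be a CQ, which requires $\calK\models P$, so the instance is ill-formed exactly when subsumption fails. To make every instance well-formed, we modify the construction: we set $\calA=\{A(x),B'(x)\}$ and $\calT'' = \calT'\cup\{B'\subsum P^*,\ B\subsum P^*\}$ with fact $P^*(x)$. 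Then $P^*(x)$ always holds through the two-axiom justification $\{B'(x), B'\subsum P^*\}$.

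We then ask, for a suitably chosen $n$, for CEs whose fact component uses the $B$-route. A cleaner alternative is to let the question be ``is there a CE at all'' with the size bound chosen large, namely $n=|\calK|+1$, and the fact defined only if entailed. Since $\{D(x)\}$ is always an abductive hypothesis (consistency of $\calK$ is assumed or checked), a CE exists iff $\calK\models P$ iff $\calT\models A\subsum B$. This last variant is what we would commit to, and it gives $\EXP$-hardness.
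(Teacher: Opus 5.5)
\textbf{Hardness matches the paper; membership has a genuine gap.}

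Your hardness reduction, in the variant you finally commit to, is the paper's. The paper reduces from $\ALC$ instance checking via the pair $(C(x),D(x))$ with a fresh concept $D$ and the bound $|\calK|+1$. Your instance $(\calT\cup\{D\subsum\top\},\{A(x)\})$ is a special case of this, and it relies on the same tacit convention that a pair whose fact is not entailed has no CE.

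The gap is Step~1 of membership. You flag it but do not close it, and your first proposed justification is false. By Definition~\ref{def:abduction}, hypotheses may use arbitrary $\ALC$ concepts over the signature, and these can give strictly smaller CEs. Take $\calK=(\{A\subsum C,\ B\subsum D\},\{A(x)\})$, $P=C(x)$, $Q=D(x)$.
\begin{itemize}
\item The hypothesis $\{(\neg C\sqcup D)(x)\}$ is consistent and subset-minimal. Its unique justification is $\{(\neg C\sqcup D)(x),A(x),A\subsum C\}$. Hence $\calJ_t(Q)=\calJ(P)$, $\calE_\fact=\calE_\truefoil=\emptyset$, and the CE has size and divergence $1$.
\item With concepts restricted to $A,B,C,D$ and their negations, the only hypotheses are $\{D(x)\}$ and $\{B(x)\}$. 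These give CEs of size (and divergence) $3$ and $4$.
\end{itemize}
So your restriction is not without loss of generality, and neither $\{Q\}$ nor renamings of $\calK$-assertions realize the minimum. The paper's one-line membership argument (``try all hypotheses for $Q$ over the signature of $\calK$'') tacitly assumes a finite, at most exponential, hypothesis space. To obtain a proof you have two options. You can prove a genuine bound on the hypotheses that need to be considered; neither you nor the paper does this. Or you can make a restricted hypothesis language an explicit part of the problem, which is your second alternative.

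\textbf{Further repairs.}
\begin{itemize}
\item Do not import the superset argument of Claim~\ref{claim:NP}. $\tiltedDagger$ is antitone in its second argument, so components computed from supersets $S_1\supseteq\calJ(P)$, $S_2\supseteq\calJ_t(Q)$ can undercount. For instance, $S_1=S_2=\calK$, $S_3=\{Q\}$ passes all your checks with $\calE_\fact=\calE_\truefoil=\emptyset$, so you would accept $n=1$ whenever $\calK\cup\{Q\}$ is consistent. In exponential time you can instead verify directly that $S_1\in\JustAll(P,\calK)$, $S_3\in\AbdAll(Q,\calK)$ and $S_2\cup S_3\in\JustAll(Q,\calK\cup S_3)$.
\item In the reduction, consistency cannot be ``checked'' inside a polynomial-time reduction. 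Instead, use that subsumption remains $\EXP$-hard on inputs where $\calT\cup\{A(x)\}$ is consistent (standard relativization).
\item Drop the $P^*,B'$ detour, since the decision problem cannot constrain which route $\calE_\fact$ uses.
\item Unary encoding of $n$ is unnecessary. If any CE exists, then $\calK\cup\{Q\}$ is consistent and $\langle\calJ(P),\emptyset,\{Q\}\rangle$ is a CE of size and divergence at most $|\calK|+1$, so $n$ can be capped there.
\end{itemize}
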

	%\alert{may also apply to the version without index? since entailment needs to be checked. }
	\begin{proof}
		\textbf{For membership}, one can try all hypotheses for $Q$ over the signature of $\calK$ in the given time, and entailment for $\ALC$ is in $\EXP$.
		\textbf{For hardness}, we reduce from instance checking for $\ALC$ KBs.
		Given a KB $\calK$, a concept $C$, and individual $x$: $\calK\models C(x)$ iff $C(x)$ has a justification in $\calK$ of size at most $|\calK|$ iff the CQ $(C(x), D(x))$ has a CE in $\calK$ of size (divergence) at most $|\calK|+1$, where we add a fresh concept $D$ in the signature for $\calK$.
		The claim follows due to the $\EXP$ complexity of the instance checking for $\ALC$.
	\end{proof}

	\section{Experimental Evaluation for Entity Contrasts}\label{sec:exp}
	We have implemented a first prototype to compute contrastive explanations involving entity contrasts.
	In the following, we outline our evaluation in detail.
	
	\subsection{Method for Computing CEs}
	We implemented the approach described in the proof of Theorem~\ref{thm:EL-entity} to compute preferred CEs in $\EL_\bot$.
	Precisely, we compute a justification for the fact~\cite{Kalyanpur2007Finding} and then determine a justification for the foil by renaming fact and foil individuals.
	The final CE $\calE$ is determined by distinguishing the axioms for the foil already present in the KB and computing any conflicts involved. % and hence compute .
	
	The source code for our algorithm and details on the experimental setup can be found online\footnote{\url{https://github.com/dice-group/CEs_DLs}}.
	Our algorithm is implemented in {Java~11}, using the {OWL~API}~\cite{OWL-API} as the core framework for ontology loading, axiom access, and reasoning. 
	%The OWL~API provides uniform support for handling ABox and TBox axioms and serves as the backbone of the experimental pipeline.
	%
	Reasoning services are provided by standard OWL reasoners, including both the {HermiT}~\cite{HERMIT} and {ELK}~\cite{ELK} reasoners. %HermiT is used to ensure sound and complete reasoning for expressive OWL~DL ontologies (e.g., $\ALC$), while ELK is employed for ontologies within the OWL~2~EL profile ($\EL$) to improve scalability. 
	Unless stated otherwise, the experimental results reported in this paper are generated using ELK.
	Finally, justifications for the fact and for the inconsistencies are computed using a black-box explanation approach based on the \texttt{BlackBoxExplanation} framework~\cite{Kalyanpur2007Finding}. 
	%This mechanism is used both to derive justifications for entailments and to compute explanations when abductive hypotheses introduce inconsistencies.
	
	%\alert{todo Balram: add server details, Experiments were conducted on a server with...}
	%Experiments were conducted on a dedicated Linux server running Debian~11 (Bullseye)  
	%The system is equipped with an x86\_64 architecture and an NVIDIA GeForce RTX~3090 GPU with 24\,GB of GPU memory, using NVIDIA driver version~560.35.05 and CUDA~12.6.
	Experiments were conducted on a dedicated Debian 11 (Bullseye) Linux server equipped with an x86\_64 architecture and an NVIDIA GeForce RTX 3090 GPU with 24 GB of memory.
	%The experimental framework was executed on the server named \texttt{limbo}, configured as a compute server.
	%
	Our experiments are executed via a custom Java-based experiment runner that automatically selects CQs (involving query, facts, and foils), executes the individual functions to compute justifications, and records detailed per-run statistics in CSV log files.
	For post-processing and evaluation, we use the Python-based library \emph{pandas} within a Jupyter notebook environment.
	These scripts aggregate the CSV logs, filter invalid runs, and compute the summary statistics reported in the following section.

	\subsection{Benchmark KBs for Experiments}
	To evaluate the proposed contrastive explanation approach, we conducted experiments on a collection of real-world OWL KBs that are publicly available~\cite{ORE_2015_ZENODO,ORE_2015_PAPER}. 
	The KBs we consider for our experiments are of different shapes and sizes, as they correspond to the ones used for ABox reasoning.
	In total, we consider approximately 100 KBs as initial candidates for our experiments.
	Due to practical memory and runtime constraints, we apply a filtering step based on the size and exclude KBs containing more than 10,000 axioms. %as reasoning and explanation generation on such KBs exceeded available memory limits in our setting.
%	Moreover, we also exclude KBs that contain an empty ABox.
	This results in 41 KBs spanning a variety of domains and differing in size, structure, and ABox density, providing a diverse benchmark for evaluation.
	
	For each selected KB, concepts (queries), facts, and foils are constructed automatically. A query corresponds to an OWL class selected from the signature of the underlying KB, subject to constraints that ensure the class is neither overly general nor trivial. 
	In particular, we consider queries that must have at least one positive instance (fact) and at least one negative instance (foil). 
	Moreover, we discard queries with an excessively large fraction of positive instances.
	%
	%\alert{Todo Balram: Add a table of summary for ontologies. Include: Signature size, number of individuals, TBox size, ABox size. For each, add average value and range}
	%
	\begin{table}[t]
		\centering
		\small
		\begin{tabular}{lcc}
			\toprule
			\textbf{Measure} & \textbf{Average} & \textbf{Range} \\
			\midrule
			Signature size  & 1{,}130.84 & [25 11594] \\
			Classes count & 1{,}027.01 & [8 11594] \\
			Properties & 103.83 & [0 529] \\
			Individuals & 381.40 & [0 3307] \\
			TBox size ($|\calT|$) & 1{,}334.66 & [55 8572] \\
			ABox size ($|\calA|$) & 653.64 & [0 8234] \\
			%Total axioms ($|\mathsf{Ax}(\mathcal{O})|$) & 3{,}205.00 & [113 20166] \\
			\bottomrule
		\end{tabular}
		\caption{Summary statistics of the benchmark KBs used in our experiments. For each measure, we report the average value and the observed range [min max].}
		\label{tab:ontology-summary}
	\end{table}
	Given a query \(C\), a fact individual \(a\) is selected from the set of individuals such that the assertion \(C(a)\) is entailed by the KB $\calK$. 
	To avoid trivial explanations, we give preference to entailed facts, i.e., facts that are inferred from the KB $\calK$ rather than explicitly present in $\calK$. A foil \(b\) is selected from the remaining individuals such that \(C(b)\) is not entailed from $\calK$. We further apply additional filtering to ensure that the foil is semantically related to the fact, increasing the likelihood of meaningful contrastive explanations.
	For each KB, we perform five independent runs using different randomly sampled combinations of queries, facts, and foils.
	Each run is subject to fixed upper bounds on the number of candidate queries, as well as facts and foils per query. 
	%We also exclude KBs for which no suitable (query, fact, foil) combinations can be found, but we do record these KBs for the final evaluation.
	%
%	Our fully automated setup allows us to benchmark the proposed method across a large and heterogeneous set of knowledge bases while ensuring that our results remain uniform and reproducible.

	\subsection{Evaluation Results}
	%\alert{Todo Balram: Expand the table by adding the range for each value. Also add: conflict size}
	
	We evaluate the proposed contrastive explanation method using quantitative measures that capture both the \textit{sizes} of individual justifications and missing axioms for foil, as well as their computational cost. 
	Moreover, since our algorithm computes CEs in which TBox axioms are \textit{shared} between the fact and fol, the reported measures are computed over ABox axioms.
	
	For each successful run, we record the size of the ABox justification for the fact ($\calE_\fact$). 
	%denoted as \textit{factJustABoxSize}. 
	%Here, $|\calE_\fact|$ captures how much instance-level evidence is required to explain the fact.
	%
	To analyze the foil, 
	%reuse of existing knowledge, 
	we record the number of ABox axioms for the foil that are already present in the KB ($\calE_\truefoil$), %after replacing the fact individual with the foil individual, 
	%This is denoted by \textit{alreadyPresentABoxSize} 
	%$|\calE_\truefoil|$ reflects how much of the explanation for the foil can already be supported by the original KB.
	%
	%The size of the abductive hypothesis for the foil 
	as well as the 
	%, denoted as \textit{deltaABoxSize}, corresponds to the 
	the number of missing ABox axioms ($\calE_\missingfoil$) required to entail the foil. 
	%$|\calE_\missingfoil|$ directly captures the explanatory effort required to justify the foil.
	%
	%\todo[inline]{I replaced (i) \textit{factJustABoxSize} by
		%$|\calE_\fact|$, (ii) \textit{alreadyPresentABoxSize} by 
		%$|\calE_\truefoil|$, and (iii) \textit{deltaABoxSize} by $|\calE_\missingfoil|$ everywhere in the table. 
		%}
	%
	In cases where adding $\calE_\missingfoil$ leads to an inconsistency with the KB, we additionally compute the (size of the) conflicts ($\calC$) over the ABox, i.e., minimal sets of axioms responsible for the inconsistency. 
	This allows us to quantify when CEs cause clashes with existing instance-level knowledge.
	For each run, we also record the total execution time in milliseconds from the start of the procedure to the completion of consistency checking and the final explanation generation. 
	The runtime includes (i) justification computation for the fact, (ii) computing existing and missing axioms for the foil from its justification, (iii) consistency testing, and (iv) extracting conflict. % (when applicable).
	
	To summarize results across KBs, we aggregate per-run statistics and report descriptive measures, including the average and range for the size of each component and the runtime.
	In particular, averages are computed over all valid runs, excluding cases where no suitable (query, fact, foil) combination could be constructed.
	Finally, we also report the consistency rate, defined as the percentage of runs in which the KB remains consistent after adding the missing axioms for the foil ($\calE_\missingfoil$). This measure indicates how often $\calE_\missingfoil$ yields valid and non-conflicting CEs.
	Aggregated results are computed from \textit{per-run CSV logs} and summarized in a combined results table derived from the file \texttt{overall\_averages.csv}.
	
	\begin{table}[t]
		\centering
		\begin{tabular}{l c c}
			\hline
			\textbf{Metric} & \textbf{Average} & \textbf{Range} \\
			\hline
			Fact justification ($|\calE_\fact|$) 
			& 1.35 & [1, 4] \\
			Present foil axioms ($|\calE_\truefoil|$) 
			& 0.35 & [0, 2] \\
			Missing foil axioms ($|\calE_\missingfoil|$) 
			& 1.00 & [0, 2] \\
			Runtime (ms) 
			& 392.41 & [24, 3854]\\
			Common axioms ratio (\%) 
			& 13.50 & -- \\ %[0, 100] \\
			Conflict rate (\%) 
			& 4.00 & -- \\ %[0, 100] \\
			%Consistency rate (\%) & 96.00 & -- \\ %[0, 100] \\
			\hline
		\end{tabular}
		\caption{Results on the computation of CEs across all benchmark KBs.}
		\label{tab:overall-abduction-results}
	\end{table}

	\paragraph{Summary of Evaluation.}
	The evaluation shows that our current prototype performs rather efficiently to compute CEs.
	Moreover, the computed explanations tend to be small, spanning up to four ABox axioms. 
	This is in line with the existing observations on the computation of justifications~\cite{Kalyanpur2007Finding} and further explains how only a small number of assertions are absent for the foil.
	Nevertheless, in most of the cases, fact and foil share too few axioms in their history, as seen by the  \textit{Common Axioms ratio}. 
	Eventually, there are only a small number of cases when the computed explanations conflict with the KB (\textit{Conflict rate}) regarding assertions for the foil, and the number of axioms in such cases also remains small.

	\section{Concluding Remarks}\label{sec:conclusion}
	
	We introduce the concept of contrastive questions and their answers to DL KBs and highlights several challenges in their implementation. 
	Our contributions further include establishing principles to address these challenges and providing definitions for explanations that adhere to these principles.
	We confirm that Lipton's framework for contrastive explanations can be tailored to facts and foils formulated in DLs. 
	This is achieved by %reformulating Lipton's definition for CEs and 
	presenting three principles governing CEs in DLs.
	%As this is the starting point for defining contrastive explanations in DLs, we have dedicated substantial effort to extensively highlight and address the involved issues.
	Towards the end, we presented criteria for preferred contrastive explanations and provided further theoretical analysis for CEs.
	We also present a prototype that demonstrates the practical readiness of our framework for CEs.
	
	One motivation for CEs lies in their specificity to the contrast in a question %motivating their definition 
	since a contrast specified by an inquirer reflects their preference. %in the context in which they seek an explanation.
	%We explain with an example how this is achieved via CEs.
	For instance, in the explanation context $\calKrun$ from Example~\ref{ex:running-prelim}, let us add an assertion about another individual, ``Carl'', namely, $\{\concept{publishesIn}(\text{Carl}, \text{ML})\}$.
	Then, an explanation for CQ $(\concept{Hired}(\text{Alice}), \concept{Hired}(\text{Carl}))$ depicts that Alice is qualified whereas Carl is not.
	The comparison of this explanation with the one for $(\concept{Hired}(\text{Alice}), \concept{Hired}(\text{Bob}))$ highlights how contrastive questions model an inquirer's \textit{preference} and a \textit{context} requested by them in an explanation.
	
	To the best of our knowledge, the notion of CEs has not previously been studied in the context of DLs, except for the specific case of entity foils~\cite{Koopmann26}.
%	We believe introducing the idea to DLs has opened a new research direction. 
	Our work can be considered preliminary in this area, and several interesting questions remain yet to be explored.
	Interestingly, our definitions extend to any DL for which a notion of abductive hypothesis exists. This follows since our approach essentially provides a mechanism for combining the two justifications to explain the difference between a fact and a foil.
	Considering the case of DLs with cardinality restrictions, one way to entail the foil in a CE is to (i) add missing role assertions via $\calE_\missingfoil$ to satisfy $\geq$-constraints, and (ii) remove assertions via $\calC$ if some $\leq$-constraints are violated.
	Nevertheless, a CE may not always exist when the two steps cannot be carried out simultaneously.

	\noindent
	\textit{Future Work.}
	First, a detailed exposition of our approach for more expressive DLs and an implementation that can handle conflicts efficiently is on our list.
	We aim towards a practical algorithm and an implementation that scales at least for $\ALC$ with \textit{cardinality} constraints, and \textit{role hierarchy}.
	The main complexity in our implementation arises from computing a justification for the fact and conflicts, which can be costly for large datasets; existing tools such as modules~\cite{ijcai2023p374} could help improve runtime. 
	Moreover, computing contrastive explanations for the concept foils also seems an interesting yet challenging task.
	On the empirical side, a user study should be conducted to assess the hypothesis of whether such explanations are preferred over \emph{plain} justifications.
	This would also allow us to evaluate different preference criteria for the best CEs and compare them. %(Sec.~\ref{sec:pref}).
	%
	%\textbf{Secondly}, it would be interesting to explore the cases of incompatible facts and foils.
	%We say that $P$ and $Q$ are incompatible when they are together inconsistent in a given explanation context $\calK$.
	%In this work, we assumed that $P$ and $Q$ are \emph{compatible} and there exists an abductive explanation for $Q$.
	%In the case of incompatible contrasts (e.g., in $\ALC$), our approach boils down to the standard justification-based approach.
	%It would be interesting to explore the case of incompatible contrasts, via the abduction problem for inconsistent KBs~\cite{du2015towards}.
	%Essentially, one can define CEs by considering the source of incompatibility. %(axioms involved in an inconsistency).
	%This will allow defining CEs, even if no abductive explanation for the foil exists.
	%
	On the theory side, it remains to explore further the complexity of the decision and verification problems concerning a subset or cardinality minimal explanation, or a preferred CE. % under a given optimality criteria. % of a given cardinality (or divergence).
	%For $\ALC$, one can obtain the same hardness for concept foils  %ExpTime-hardness 
	%as the complexity of instance checking (is $C(x)$ true in a KB $\calK$?).
	Lastly, one could look at how recent advances on abduction for inconsistent KBs can be used to define CEs while allowing conflicts~\cite{HaakWhyNot25}.
	%several other similarity or preference measures can be defined for preferred CEs, based on the semantic similarity~\cite{claudia2005semantic,lehmann2012framework,racharak2018personalizing}.
%	Then, the question of user preference regarding those measures is also worth exploring via user surveys.
	
	In line with existing literature, addressing privacy aspects is outside the scope of this paper. 
	Nevertheless, %in practical deployments, 
	explanations can be generated only from selected subsets of the KB, for instance by restricting the signature~\cite{Koopmann21b}. 
	Alternatively, sensitive axioms can be filtered or abstracted before explanations are produced. Moreover, our approach relies on symbolic reasoning and complexity-aware algorithms rather than data-intensive learning, and hence requires comparatively modest computational and data resources, aligning with the goals of frugal AI.

	\subsubsection*{Acknowledgments}
	We thank all anonymous reviewers for their valuable feedback. Research was funded by the  German Research Foundation (DFG), grant TRR 318/3 2026 – 438445824, 
	the Ministry of Culture and Science of North Rhine-Westphalia (MKW NRW) within projects WHALE (LFN 1-04) funded under the Lamarr Fellow Network programme and
	project SAIL, grant NW21-059D,
	and by the German Federal Ministry of Research, Technology and Space (BMFTR) within the project KI-Akademie OWL under the grant no 16IS24057B.
	
	%\clearpage 
	
	%\todo[inline]{ 1. comparing them based on subset-minimality is not straightforward; --> this is not an issue when we use flat abduction. \\
		%Other optimality measures, based on abduction problem: such as connection minimal explanations, or allowing complex assertions in the abductive hypothesis. }
	
	%\subsubsection{Some Rough notes}
	%when there is inconsistency involved, we can also define CEs to include the part that causes inconsitency. As depicted in the xample below. To achieve this, one should define CEs by emphasizing/prioritizing concept names rather than abducing the missing assertions. Q: can we then find CEs directly from justifications?
	%General setting (for entity contrasts) given a justification, look for x-assertions which are missing for y, or negated for y. define this to be CEs.
	\begin{comment}
		\begin{example}[Entity Foils]
			Let $\calT= \{{A}\subsum{B}, {B}\subsum{C}, C\subsum{D}\}$ and $\calA=\{A(x)\}$. 
			Then, $A(x)$ is the only justification for $D(x)$. 
			Now, CeXp for why $D(x)$ instead of $D(y)$ should cite: \emph{because $A(x)$
				instead of $A(y)$}. 
			But when $\calA=\{A(x),\neg A(y)\}$, there will be no CeXp.
			The question is, can one still answer the CeXp-problem?
			In principle, one can imagine the response: \emph{because $A(x)$ and $A$ is $B$, $B$ is $C$ and $C$ is $D$ whereas $\neg A(y)$ implies either $B(y)$ or $C(y)$ should be true}. But it defies the definition of the \emph{difference condition}.
		\end{example}
	\end{comment}

	\bibliographystyle{splncs04}
	\bibliography{main}
\end{document}